\documentclass{article} 
\usepackage{iclr2027_conference,times}

\usepackage{amsmath,amsfonts,bm}

\def\eqref#1{equation~\ref{#1}}

\def\1{\bm{1}}

\DeclareMathAlphabet{\mathsfit}{\encodingdefault}{\sfdefault}{m}{sl}
\SetMathAlphabet{\mathsfit}{bold}{\encodingdefault}{\sfdefault}{bx}{n}

\usepackage{wrapfig}
\usepackage{url}
\usepackage{multicol}
\usepackage{amsmath,amssymb,amsthm}
\usepackage{float}
\usepackage{booktabs}
\usepackage{multirow}
\usepackage{graphicx}
\usepackage{subcaption}
\usepackage{enumitem}
\usepackage{algorithm}
\usepackage{algpseudocode}
\algrenewcommand\algorithmicrequire{\textbf{Require:}}
\algrenewcommand\algorithmicensure{\textbf{Ensure:}}
\usepackage{tikz}
\usetikzlibrary{arrows.meta,positioning,fit,calc}
\usepackage{hyperref}

\usepackage{colortbl}
\definecolor{catBase}{HTML}{F5F5F5}
\definecolor{catSearch}{HTML}{FFE6CC}
\definecolor{catEvid}{HTML}{D5E8D4}
\definecolor{catFine}{HTML}{DAE8FC}
\definecolor{catConfess}{HTML}{E1D5E7}

\newcommand{\hdr}[1]{{\fontsize{7.5pt}{9pt}\selectfont\bfseries #1}}
\graphicspath{{figures/}{../figures/}}
\newtheorem{proposition}{Proposition}

\title{OmniConfess: Eliciting Token Confessions\\
to Mitigate Omni-Modal Hallucination}

\author{%
\parbox{0.98\textwidth}{%
\centering
Huiqiang Rong$^{1}$ \quad
Haoran Luo$^{2}$\thanks{Corresponding authors:
\texttt{haoran.luo@ntu.edu.sg},
\texttt{yifan\_zhu@bupt.edu.cn}.} \quad
Hui Feng$^{1}$ \quad
Zhonghong Ou$^{1}$\\[2pt]
Kaiwen Xue$^{1}$ \quad
Guoxin Zhang$^{1}$ \quad
Yifan Zhu$^{1}$\footnotemark[1]\\[3pt]
{\normalfont\small
$^{1}$Beijing University of Posts and Telecommunications \quad
$^{2}$Nanyang Technological University, Singapore\\
\texttt{\{yifan\_zhu,rhq\}@bupt.edu.cn} \quad
\texttt{haoran.luo@ntu.edu.sg}
}%
}%
}

\iclrfinalcopy 

\begin{document}

\maketitle
\lhead{arXiv preprint}
\vspace{-6mm}
\begin{abstract}
Omni-modal large language models (OmniLLMs) unify text, images, audio, and video, yet hallucinate when generation relies on the wrong evidence. Existing inference-time methods can reduce hallucinations, but rarely reveal which evidence sustains a generated commitment. We introduce \textbf{OmniConfess}, a training-free method for mitigating omni-modal hallucinations. It fixes a candidate response and re-scores it at token resolution under controlled channel-wise evidence interventions, producing a structured token-by-channel confession that reveals the response's evidential dependence. OmniConfess uses this confession to preserve grounded content and correct commitments driven by irrelevant or contradictory evidence. To evaluate OmniConfess, we construct \textbf{OmniHalluBench}, a 3,540-example benchmark built from six datasets spanning text, image, audio, and video settings and both judgment and free-form generation. Experiments show that OmniConfess mitigates hallucinations across heterogeneous modality and task settings. Our code and benchmark are publicly available\footnote{\url{https://github.com/RongHuiQiang/OmniConfess}}.
\end{abstract}

\vspace{-2mm}
\section{Introduction}

\begin{wrapfigure}{r}{0.51\textwidth}
\vspace{-4.5mm}
\centering
\includegraphics[width=7cm]{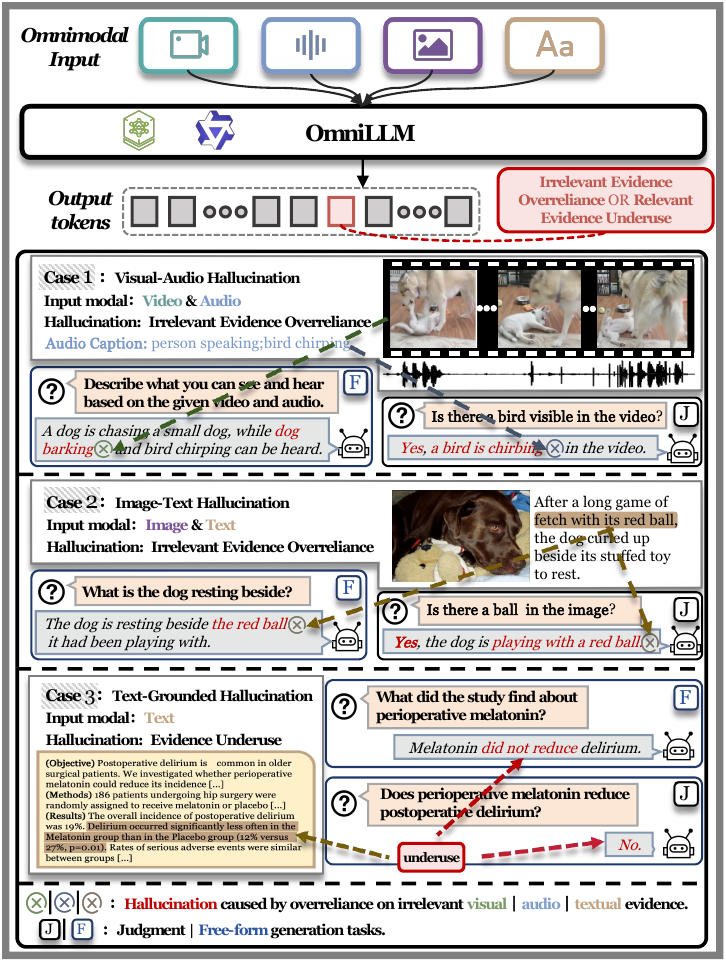}
\vspace{-2mm}
\caption{Examples of OmniLLM hallucination.}
\label{F1}
\vspace{-4mm}
\end{wrapfigure}

\label{sec:intro}

Omni-modal large language models (OmniLLMs) unify text, images, audio, and video in a single generative model, enabling reasoning over heterogeneous evidence. However, this integration can cause omni-modal hallucination when the model relies on irrelevant evidence or fails to sufficiently use relevant evidence \citep{leng2025cmm,sungbin2025avhbench} (see Figure~\ref{F1}). Hallucination control therefore requires assessing both the correctness of the final output and the evidence supporting each generated commitment.

Existing hallucination-control methods include retrieval
augmentation \citep{qu2024ara,xia2024rule}, alignment
\citep{yu2024rlhfv,sun2024llavarlhf}, and fine-tuning
\citep{chaubey2026moddpo}, often requiring additional training
or external resources. Training-free inference-time methods
instead follow three paradigms, as illustrated in
Figure~\ref{F2}: search-based methods explore or verify candidate
generations \citep{yao2023tree,xu2025phidecoding};
evidence-contrast methods steer decoding through evidence-induced
distribution shifts \citep{leng2024vcd,jung2025avcd}; and
fine-grained methods exploit token-level signals to localize
hallucinations \citep{tokenguard2026,ouyang2026taf}.

However, three main challenges remain when applied to hallucination control in OmniLLMs.
\textbf{(i) Trajectory variation.} Regeneration after evidence intervention alters the autoregressive context, confounding evidence effects with sequence differences \citep{li2025tokenactivation}.
\textbf{(ii) Entangled cross-channel dependence.} OmniLLMs jointly process text, images, audio, and video, obscuring channel contributions to individual tokens \citep{parcalabescu2023mmshap}.
\textbf{(iii) Insufficient token-level resolution.} Evidential dependence varies across tokens within each channel, making global or modality-level scores insufficient to identify revision targets \citep{chen2026mllms}.
Reliable characterization thus requires token-level channel-wise comparisons along a frozen candidate sequence.

To address these challenges, we propose \texttt{\textbf{OmniConfess}}, an inference-time hallucination correction method for OmniLLMs, with three key designs:
\textbf{(1) Commitment Anchoring.} We generate and freeze a candidate response to eliminate trajectory variation across evidence interventions.
\textbf{(2) Evidence Interrogation.} We intervene on individual evidence channels and re-score the frozen candidate to isolate token-level, channel-wise dependence.
\textbf{(3) Confession-Guided Correction.} We organize these measurements into a token-by-channel confession to recalibrate judgment scores or locate and revise erroneous free-form spans.Figure~\ref{F2} contrasts the three existing inference-time paradigms with the commitment-centered workflow of OmniConfess.

\begin{figure}[t]
  \centering
  \includegraphics[width=\linewidth]{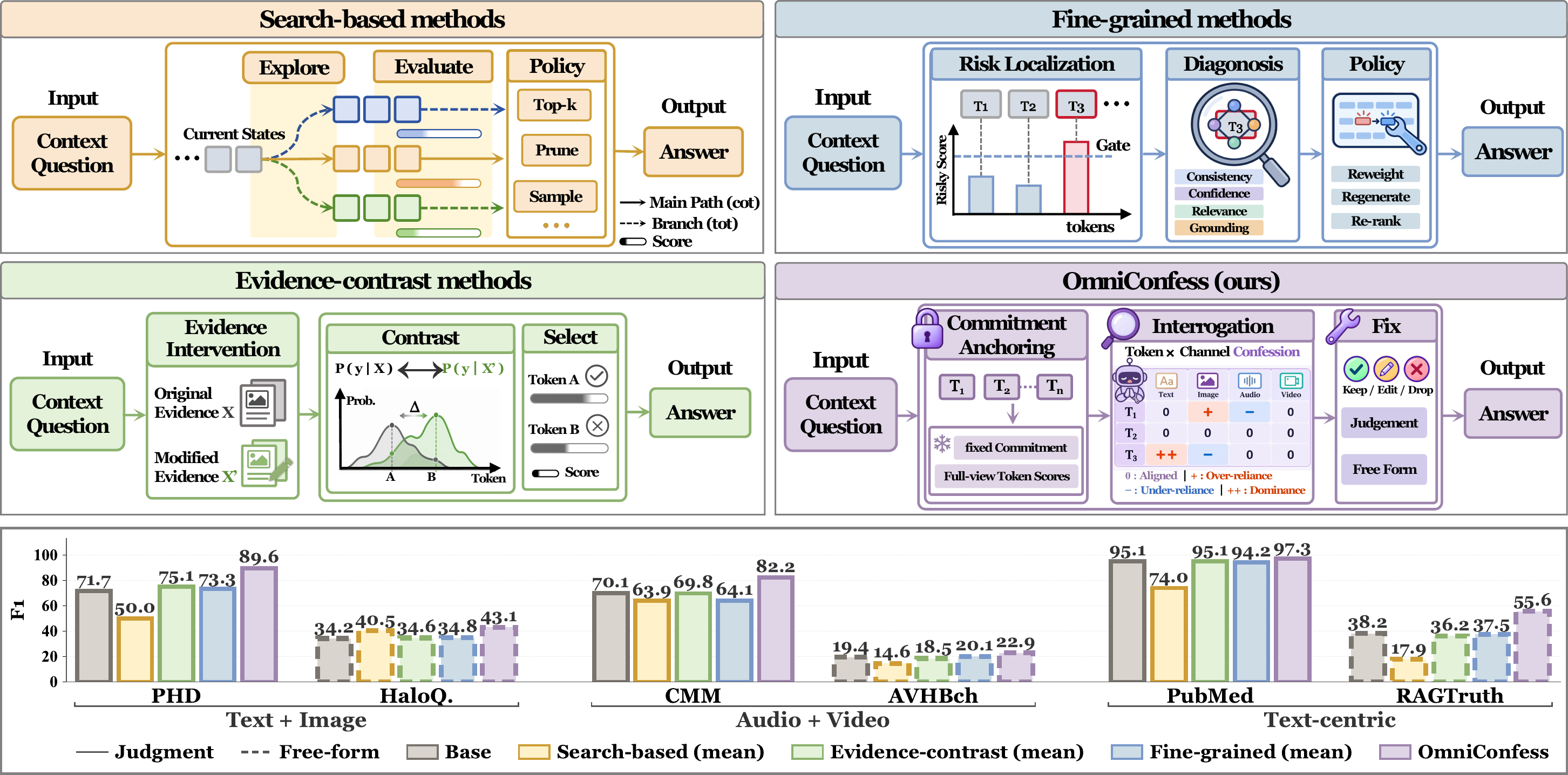}
  \caption{Comparison of three existing inference-time paradigms and OmniConfess, with category-mean F1 across six datasets using Qwen2.5-Omni-7B.}
  \label{F2}
\vspace{-2mm}
\end{figure}

To evaluate OmniConfess, we construct \texttt{\textbf{OmniHalluBench}} with 3,540 hard examples from six public datasets, spanning text, images, audio, and video across judgment and free-form generation tasks, including cross-modal conflicts and text-grounded hallucinations. OmniConfess outperforms the strongest relevant baseline across all six datasets on the primary backbone. Ablations further validate the importance of the confession's token-specific and channel-specific structure.

\vspace{-2mm}
\section{Related Work}
\vspace{-2mm}
\label{sec:related}

\paragraph{Omni-Modal Hallucination.}
As foundation models evolve from vision-language systems toward unified processing of text, images, audio, and video \citep{xu2025qwen25omni,xu2025qwen3omni,nvidia2026nemotron}, hallucination research has expanded from visual grounding failures to encompass cross-channel interference in omni-modal settings \citep{li2023pope,guan2024hallusionbench,sungbin2025avhbench,leng2025cmm}.
Recent studies on multimodal hallucination examine modality contributions to generation, differences in cross-modal grounding, and evidential attribution of generated content \citep{parcalabescu2023mmshap,kim2026omhbench,yan2026omnitrace}.

\paragraph{Inference-Time Hallucination Mitigation.}
Existing inference-time methods mainly follow three paradigms. Search-based methods explore, verify, or select candidates for reliability \citep{wang2023selfconsistency,yao2023tree,xu2025phidecoding}. Evidence-contrast methods exploit evidence-induced shifts for decoding, from visual contrast to audio-visual and modality-adaptive decoding \citep{shi2024trusting,leng2024vcd,favero2024m3id,jung2025avcd,chung2026mad,omnihallucL2026}. Fine-grained methods mitigate hallucinations via token-level or internal signals, including attention patterns, risk, and cross-modal representations \citep{huang2024opera,tokenguard2026,ouyang2026taf,li2026hb}. In this work, we introduce OmniConfess to explicitly characterize token-level, channel-wise evidential dependencies in OmniLLMs for hallucination correction.

\vspace{-2mm}
\section{Preliminary}
\label{sec:problem}
\vspace{-2mm}

We examine three properties of evidential dependence in
hallucinated OmniLLM outputs: front-loaded dependence,
cross-channel heterogeneity, and token-level concentration
(Figure~\ref{F3}).

For dataset $d$, let $\mathcal H_d$ denote its hallucinated
samples. Each $h\in\mathcal H_d$ has input $X_h$,
evidence channels $\mathcal M_h$, and fixed candidate
$y_h=(y_{h,1},\ldots,y_{h,n_h})$ of length $n_h$.
Let $\theta$ denote the OmniLLM parameters.
Token $i$'s dependence on channel $m\in\mathcal M_h$ is
\vspace{-0.5mm}
\begin{equation}
\delta_{h,i}^{(m)}
=
\log
\frac{
  p_\theta(y_{h,i}\mid y_{h,<i},X_h)
}{
  p_\theta(y_{h,i}\mid y_{h,<i},X_h^{-m})
},
\label{eq:char_delta}
\end{equation}
\vspace{-0.5mm}
where $p_\theta$ is the OmniLLM conditional token
distribution, $y_{h,<i}$ is the fixed prefix, and
$X_h^{-m}$ removes channel $m$. The overall token
dependence magnitude is
$u_{h,i}=\sum_{m\in\mathcal M_h}|\delta_{h,i}^{(m)}|$.

\begin{figure}[t]
  \centering
  \includegraphics[width=\linewidth]{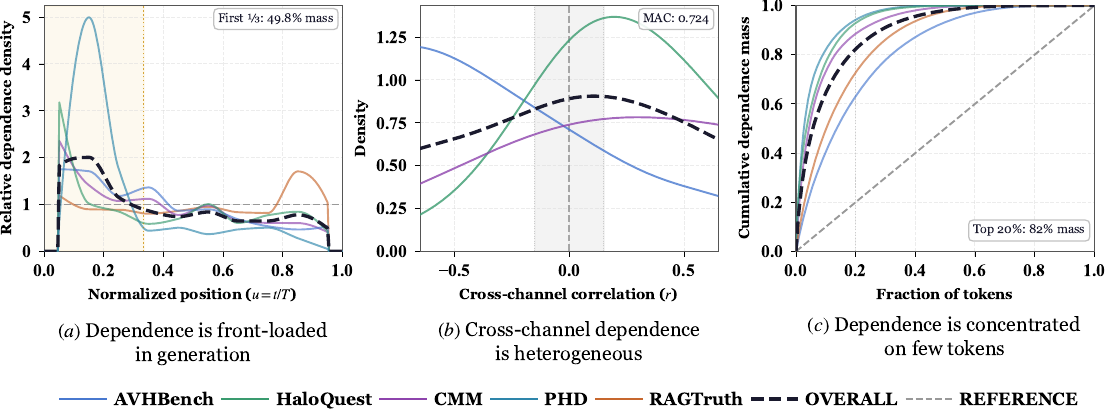}
  \vspace{-5mm}
  \caption{Three structural properties of evidential dependence in
hallucinated OmniLLM outputs.}
  \label{F3}
  \vspace{-1mm}
\end{figure}

\noindent\textbf{Front-Loaded Dependence.}
We partition normalized token positions $i/n_h$
into $B$ equal intervals $I_b=((b-1)/B,b/B]$.
The relative dependence in interval $b$ is
\begin{equation}
D_d(b)
=
\frac{1}{|\mathcal H_d|}
\sum_{h\in\mathcal H_d}
\frac{
  \sum_{i:\,i/n_h\in I_b}u_{h,i}
}{
  B^{-1}\sum_{i=1}^{n_h}u_{h,i}
},
\qquad b=1,\ldots,B.
\label{eq:char_pos}
\end{equation}
Figure~\ref{F3}(a) shows the first third holds
$49.8\%$ of dependence mass. Early token changes
alter later contexts, motivating anchoring to
separate evidence and trajectory effects.

\noindent\textbf{Cross-Channel Heterogeneity.}
We measure coupling between distinct channels
$m,m'\in\mathcal M_h$ through the correlation
of their token-level dependence trajectories:
\begin{equation}
r_h^{(m,m')}
=
\frac{
  \langle
    \widetilde{\mathbf d}_h^{(m)},
    \widetilde{\mathbf d}_h^{(m')}
  \rangle
}{
  \|\widetilde{\mathbf d}_h^{(m)}\|_2
  \|\widetilde{\mathbf d}_h^{(m')}\|_2
},
\label{eq:char_corr}
\end{equation}
where
$\widetilde{\mathbf d}_h^{(m)}
=(\delta_{h,i}^{(m)}-\bar{\delta}_h^{(m)})_{i=1}^{n_h}$
and $\bar{\delta}_h^{(m)}$ is the mean dependence.
Figure~\ref{F3}(b) shows coupled channels with
heterogeneous directions, motivating
channel-specific attribution.

\noindent\textbf{Token-Level Concentration.}
The mass captured by
the top $q$ fraction of tokens is
\begin{equation}
C_d(q)
=
\frac{1}{|\mathcal H_d|}
\sum_{h\in\mathcal H_d}
\frac{
  \sum_{j=1}^{\lceil qn_h\rceil}u_{h,(j)}^\downarrow
}{
  \sum_{i=1}^{n_h}u_{h,i}
},
\qquad q\in[0,1].
\label{eq:char_conc}
\end{equation}
With \(u_{h,(j)}^\downarrow\) denoting the $j$-th largest token dependence magnitude. Figure~\ref{F3}(c) shows that the top $20\%$ of tokens capture $82\%$ of the dependence mass on average, motivating token-level localization.

Together, these findings motivate OmniConfess's
fixed-candidate anchoring, channel-wise intervention,
and token-level rescoring for reliable evidence attribution.

\vspace{-2mm}
\section{Methodology: OmniConfess}
\vspace{-2mm}

\label{sec:method}
\begin{figure}[t]
  \centering
  \includegraphics[width=\linewidth]{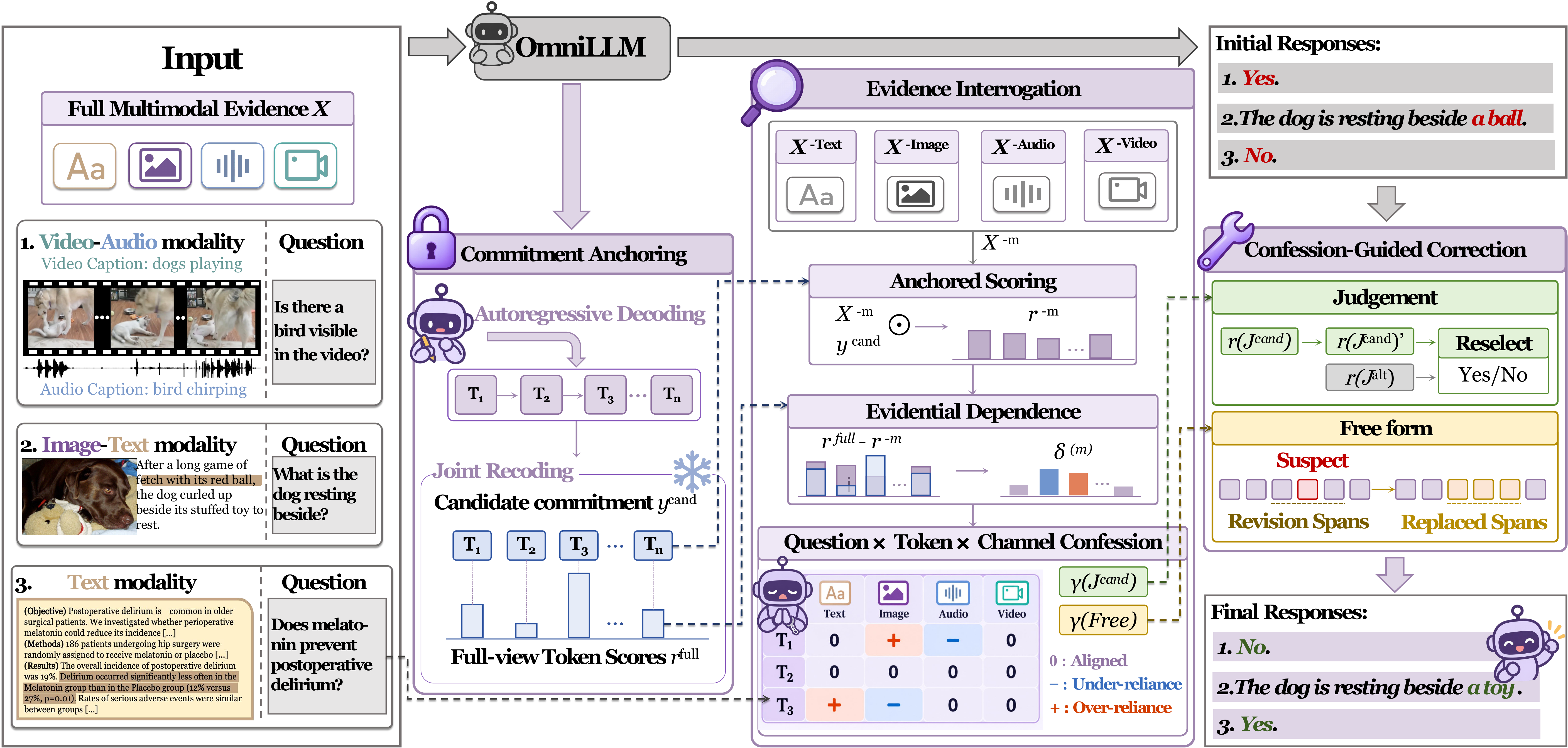}
  \vspace{-5mm}
  \caption{Overview of the OmniConfess framework: commitment anchoring, token-level evidence interrogation, and confession-guided correction for reliable omni-modal outputs.}
  \label{F4}
  \vspace{-1mm}
\end{figure}

We introduce OmniConfess
(Figure~\ref{F4}), comprising Commitment Anchoring,
Evidence Interrogation, and Confession-Guided Correction
for training-free inference-time hallucination mitigation.

\vspace{-2mm}
\subsection{Prompting Setup}
\vspace{-1.5mm}

Before the OmniConfess pipeline, we combine evidence-grounded
prompting with task-specific constraints. Prompts enforce reliance
on provided evidence and appropriate output formats, while
task-specific instructions require audible-only judgments in
\textbf{CMM}, ``Yes/No/Maybe'' in \textbf{PubMedQA},
acknowledgment of absent visual details in \textbf{HaloQuest},
and separate audio-visual observations in \textbf{AVHBench}.Full
generation and correction prompts are provided
in Appendix~\ref{app:prompts}.

\vspace{-2mm}
\subsection{Commitment Anchoring}
\vspace{-1.5mm}

For full multimodal evidence \(\mathbf X=(X^{(1)},\ldots,X^{(M)})\), OmniConfess records selected-token log-probabilities during autoregressive decoding, obtaining the candidate and full-view scores:
\begin{equation}
\left(y_{1:n}^{\mathrm{cand}},\mathbf r_{\theta}^{\mathrm{full}}\right)\triangleq\left(\operatorname{Decode}_{\theta}(\mathbf X),\left[\log p_{\theta}\!\left(y_i^{\mathrm{cand}}\mid y_{<i}^{\mathrm{cand}},\mathbf X\right)\right]_{i=1}^{n}\right)\in\mathcal V_{\mathrm{tok}}^{\,n}\times\mathbb R^n .
\label{eq:commitment_generation}
\end{equation}
where \(M\) counts evidence channels \(X^{(m)}\),
\(\theta\) denotes OmniLLM parameters, \(p_\theta\)
its token distribution, \(\operatorname{Decode}_\theta\)
autoregressive decoding, \(\mathcal V_{\mathrm{tok}}\)
the vocabulary, \(n\) candidate length, and \(i\)
token position. The candidate
\(y_{1:n}^{\mathrm{cand}}\equiv y^{\mathrm{cand}}\)
has selected token \(y_i^{\mathrm{cand}}\),
prefix \(y_{<i}^{\mathrm{cand}}\), and full-view log-probabilities \(\mathbf r_\theta^{\mathrm{full}}\).

The candidate and its prefixes are frozen during interrogation.
OmniConfess retains only \(y^{\mathrm{cand}}\),
\(\mathbf r_{\theta}^{\mathrm{full}}\), and fixed alternatives
\(a_i\) with scores: the opposing class for judgment
and the top nonselected token for free-form generation.
Under altered evidence \(\mathbf X'\), it computes anchored scores:
\begin{equation}
\mathbf r_{\theta}\!\left(\mathbf X';y^{\mathrm{cand}}\right)\triangleq\left[\log p_{\theta}\!\left(y_i^{\mathrm{cand}}\mid y_{<i}^{\mathrm{cand}},\mathbf X'\right)\right]_{i=1}^{n}\in\mathbb R^n .
\label{eq:anchored_response}
\end{equation}
where \(\mathbf r_\theta(\mathbf X';y^{\mathrm{cand}})\) denotes the frozen candidate's token-score vector under altered evidence.

\begin{proposition}
Motivated by Figure~\ref{F3}(a),
commitment anchoring eliminates trajectory variation from
token-level comparisons across evidence conditions.
\end{proposition}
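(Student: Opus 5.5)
To make the proposition precise, I will first define trajectory variation as a decomposition of any token-level comparison between two evidence conditions. Take the full view \(\mathbf X\) and an altered view \(\mathbf X'\) (for example \(\mathbf X^{-m}\)). Without anchoring, the comparison at position \(i\) uses two different trajectories: the candidate \(y^{\mathrm{cand}}=\operatorname{Decode}_\theta(\mathbf X)\) and the regenerated sequence \(y'=\operatorname{Decode}_\theta(\mathbf X')\). The unanchored difference at position \(i\) then splits by adding and subtracting the cross term \(\log p_\theta(y_i^{\mathrm{cand}}\mid y^{\mathrm{cand}}_{<i},\mathbf X')\):
\begin{equation*}
\log p_\theta(y^{\mathrm{cand}}_i\mid y^{\mathrm{cand}}_{<i},\mathbf X)-\log p_\theta(y'_i\mid y'_{<i},\mathbf X')
=\underbrace{\log\frac{p_\theta(y^{\mathrm{cand}}_i\mid y^{\mathrm{cand}}_{<i},\mathbf X)}{p_\theta(y^{\mathrm{cand}}_i\mid y^{\mathrm{cand}}_{<i},\mathbf X')}}_{\text{evidence effect}}
+\underbrace{\log\frac{p_\theta(y^{\mathrm{cand}}_i\mid y^{\mathrm{cand}}_{<i},\mathbf X')}{p_\theta(y'_i\mid y'_{<i},\mathbf X')}}_{\text{trajectory effect}} .
\end{equation*}
The second term is what I will call trajectory variation. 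It is a function of the pair \((y'_{\le i},y^{\mathrm{cand}}_{\le i})\) evaluated under a fixed evidence condition, and it is generally nonzero as soon as the two trajectories diverge at some \(j\le i\).

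The core step is to show that the anchored comparison is exactly the first term. By \eqref{eq:commitment_generation} and \eqref{eq:anchored_response}, the \(i\)-th entry of \(\mathbf r^{\mathrm{full}}_\theta-\mathbf r_\theta(\mathbf X';y^{\mathrm{cand}})\) uses the same frozen token \(y_i^{\mathrm{cand}}\) and the same frozen prefix \(y^{\mathrm{cand}}_{<i}\) in both conditions. Setting \(y'\equiv y^{\mathrm{cand}}\) in the decomposition makes the trajectory term \(\log 1=0\) identically, for every \(i\) and every \(\mathbf X'\). The same argument covers the fixed alternatives \(a_i\), since they too are recorded once and then held fixed across conditions. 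This also recovers \eqref{eq:char_delta} as the special case \(\mathbf X'=\mathbf X^{-m}\).

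I would then argue that the anchored difference is a genuine conditional contrast. The token \(y^{\mathrm{cand}}_i\) and prefix \(y^{\mathrm{cand}}_{<i}\) are held fixed as conditioning variables, and only the evidence argument of \(p_\theta\) varies. The anchored difference is therefore a function of \((\mathbf X,\mathbf X')\) alone, given the frozen candidate. In particular, it does not depend on any sampling randomness or decoding-path choice under \(\mathbf X'\), because \(\mathbf X'\) is never used for decoding. This answers the concern motivated by Figure~\ref{F3}(a): the front-loaded dependence shows that early divergence would contaminate every later position through \(y'_{<i}\neq y^{\mathrm{cand}}_{<i}\), and anchoring cuts that channel of contamination at every position.

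The main obstacle is definitional rather than computational. The claim is only meaningful once trajectory variation is formalized as the cross-trajectory term above, and I must state clearly what the result does \emph{not} assert. In particular, it does not say the anchored prefix is in-distribution under \(\mathbf X'\): the frozen prefix may be unlikely under the altered evidence, so the evidence effect is measured off-policy. I would handle this by stating the proposition as an exact identity, namely that the trajectory component vanishes, and remarking that any residual effect of prefix atypicality belongs to the evidence term by construction.
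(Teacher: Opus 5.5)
Your proposal is correct and follows essentially the same route as the paper. Like the paper, you add and subtract the frozen candidate's score under the altered view to write the unanchored contrast as $\delta_i^{(m)}(y_i^{\mathrm{cand}})+\tau_i^{(m)}$, then observe that anchoring makes the trajectory term vanish identically. The paper additionally uses autoregressive factorization to show that the anchored token-level terms sum exactly to $\log\bigl(p_\theta(y^{\mathrm{cand}}\mid\mathbf X)/p_\theta(y^{\mathrm{cand}}\mid\mathbf X^{-m})\bigr)$, a consistency check you could add, while your off-policy-prefix caveat is a useful remark the paper omits.
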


\vspace{-2mm}
\noindent\textit{Proof.}
We provide experimental results in Sections~\ref{sec:ablation} and proofs in Appendix~\ref{app:anchoring_proof}.

\vspace{-2mm}
\subsection{Evidence Interrogation}
\label{sec:interrogation}
\vspace{-1.5mm}

OmniConfess interrogates frozen commitments
through channel-wise evidence removal.

\noindent\textbf{Anchored Scoring.}
OmniConfess reuses full-view scores and sequentially
rescores the candidate and alternatives under
\(\mathbf X^{-m}\), with channel \(m\) removed.
The candidate-to-alternative margin is
\begin{equation}
F_i(\mathbf Z)\triangleq
\log\frac{
p_\theta(y_i^{\mathrm{cand}}\mid y_{<i}^{\mathrm{cand}},\mathbf Z)
}{
p_\theta(a_i\mid y_{<i}^{\mathrm{cand}},\mathbf Z)
},
\quad
\mathbf Z\in\{\mathbf X\}\cup\{\mathbf X^{-m}\}_{m=1}^{M}.
\label{eq:decision_margin}
\end{equation}
\noindent\textbf{Evidential Dependence.}
For \(v\in\{y_i^{\mathrm{cand}},a_i\}\),
define token dependence and margin contribution:
\begin{equation}
\begin{aligned}
\delta_i^{(m)}(v)
&\triangleq
\log\frac{
p_\theta(v\mid y_{<i}^{\mathrm{cand}},\mathbf X)
}{
p_\theta(v\mid y_{<i}^{\mathrm{cand}},\mathbf X^{-m})
},\\[1pt]
\kappa_i^{(m)}
&\triangleq
F_i(\mathbf X)-F_i(\mathbf X^{-m}).
\end{aligned}
\label{eq:channel_dependence}
\end{equation}
\noindent\textbf{Question \(\times\) Token \(\times\) Channel Confession.}
Task rules define channel relevance
\(r_m(Q,T)\in\{0,1\}\) for question \(Q\)
and task type \(T\), with
\(\mathcal R=\{m:r_m(Q,T)=1\}\) and
\(\mathcal I=\{1,\ldots,M\}\setminus\mathcal R\).
Relevance and contribution thresholds assign
diagnostic states. The structured confession is
\begin{equation}
\mathbf C_\theta(Q,T)\triangleq
\left[
\left(
\delta_i^{(m)}(y_i^{\mathrm{cand}}),
\kappa_i^{(m)},
r_m(Q,T),
\ell_i^{(m)}
\right)
\right]_{\substack{i=1,\ldots,n\\m=1,\ldots,M}}.
\label{eq:structured_confession}
\end{equation}
where \(\ell_i^{(m)}\in\{0,-,+,++\}\)
denotes aligned, under-reliant, over-reliant,
or dominant dependence, respectively.
For retained channels
\(\mathcal S\subseteq\{1,\ldots,M\}\),
the additive surrogate is
\begin{equation}
\widehat F_i(\mathcal S)
\triangleq F_i(\mathbf X)-\sum_{m\notin\mathcal S}\kappa_i^{(m)}
=\widehat F_i(\varnothing)+\sum_{m\in\mathcal S}\kappa_i^{(m)}.
\label{eq:evidence_decomposition}
\end{equation}
where \(\widehat F_i\) is the surrogate margin.
Set \(P_i^\varnothing=\widehat F_i(\varnothing)\),
\(E_i^\mathcal R=\sum_{m\in\mathcal R}\kappa_i^{(m)}\),
and \(E_i^\mathcal I=\sum_{m\in\mathcal I}\kappa_i^{(m)}\),
denoting residual preference, relevant evidence,
and irrelevant evidence, respectively.

\noindent\textbf{Task-specific Confession.}
The resulting judgment correction and free-form
token signals are
\begin{equation}
\begin{aligned}
\gamma(J^{\mathrm{cand}})
&\triangleq E_d^\mathcal R-F_d(\mathbf X)
=-P_d^\varnothing-E_d^\mathcal I,\\[1pt]
\gamma_i^{\mathrm{free}}
&\triangleq
\left(
E_i^\mathcal R,\;
\bigl[[P_i^\varnothing]_+-[E_i^\mathcal R]_+\bigr]_+,\;
\sum_{m\notin\mathcal R}[\kappa_i^{(m)}]_+
\right).
\end{aligned}
\label{eq:task_confession}
\end{equation}
where \(J^{\mathrm{cand}}\) is the candidate class,
\(d\) its decision position, and
\([x]_+=\max(x,0)\).

\begin{proposition}
Motivated by Figure~\ref{F3}(b), the channel-wise surrogate
preserves observed margins and yields task-specific evidence signals.
\end{proposition}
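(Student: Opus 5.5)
The plan is to make the informal statement precise as two algebraic claims and verify each directly from the definitions in equations~\ref{eq:decision_margin}--\ref{eq:task_confession}.

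\emph{(a) Margin preservation.} The surrogate $\widehat F_i$ of equation~\ref{eq:evidence_decomposition} should reproduce every margin that is actually measured. There are two cases. For $\mathcal S=\{1,\ldots,M\}$ the sum over $m\notin\mathcal S$ is empty, so $\widehat F_i(\mathcal S)=F_i(\mathbf X)$. For $\mathcal S=\{1,\ldots,M\}\setminus\{m\}$ exactly one term survives. By the definition of $\kappa_i^{(m)}$ it gives $F_i(\mathbf X)-\kappa_i^{(m)}=F_i(\mathbf X^{-m})$. So the surrogate agrees with the observed margin on all $M+1$ conditions we score. The second form of equation~\ref{eq:evidence_decomposition} follows by substituting $\widehat F_i(\varnothing)=F_i(\mathbf X)-\sum_{m=1}^M\kappa_i^{(m)}$ and splitting the full sum into $m\in\mathcal S$ and $m\notin\mathcal S$.

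The consequence is the identity
\[F_i(\mathbf X)=P_i^\varnothing+E_i^{\mathcal R}+E_i^{\mathcal I}.\]
It holds because $\mathcal R$ and $\mathcal I$ partition $\{1,\ldots,M\}$. I will also show that $\kappa_i^{(m)}=\delta_i^{(m)}(y_i^{\mathrm{cand}})-\delta_i^{(m)}(a_i)$. This follows by expanding both log-ratios, since the prefix $y^{\mathrm{cand}}_{<i}$ is frozen. Here I invoke the preceding Proposition: anchoring guarantees that the conditionals under $\mathbf X$ and $\mathbf X^{-m}$ share the same context, so the differences are well defined token by token.

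\emph{(b) Task-specific signals.} For judgment, evaluate the identity at the decision position $d$. This gives $E_d^{\mathcal R}-F_d(\mathbf X)=-P_d^\varnothing-E_d^{\mathcal I}$, which is exactly the two expressions for $\gamma(J^{\mathrm{cand}})$. The correction $F_d(\mathbf X)+\gamma=E_d^{\mathcal R}$ therefore removes the residual preference and the irrelevant-channel contribution, keeping only relevant evidence. Its sign decides whether $J^{\mathrm{cand}}$ or $a_d$ is supported by relevant channels.

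For free-form generation, I will read each component of $\gamma_i^{\mathrm{free}}$ against the decomposition:
\begin{itemize}
\item the first component is the relevant share $E_i^{\mathcal R}$;
\item the second is positive exactly when the evidence-free prior favors the candidate more than relevant evidence does, which diagnoses under-reliance;
\item the third is nonnegative and vanishes iff no channel outside $\mathcal R$ pushes the margin toward the candidate, which diagnoses over-reliance.
\end{itemize}
These are elementary monotonicity facts about $[\cdot]_+$.

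The main obstacle is interpretive. The surrogate is exact only on single-channel deletions, while interaction terms between channels are not captured. For $|\mathcal M|\ge 2$ and $\mathcal S$ of size at most $M-2$, $\widehat F_i(\mathcal S)$ generally differs from the true $F_i(\mathbf X^{\mathcal S})$. I will state the preservation claim only for the observed conditions, and add a remark that the gap equals the higher-order inclusion–exclusion interaction terms. I will also note that the identity itself is exact by construction, because $P_i^\varnothing$ absorbs the interactions; this is what Figure~\ref{F3}(b)'s coupling would otherwise confound.
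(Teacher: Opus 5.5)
Your proposal is correct and follows essentially the same route as the paper. You verify that $\widehat F_i$ reproduces the $M+1$ observed margins, use the partition of $\{1,\ldots,M\}$ into $\mathcal R$ and $\mathcal I$ to get $F_i(\mathbf X)=P_i^\varnothing+E_i^{\mathcal R}+E_i^{\mathcal I}$, and read off $F_d(\mathbf X)+\gamma(J^{\mathrm{cand}})=E_d^{\mathcal R}$ together with the meaning of each free-form component. The differences are minor: the paper starts from a generic additive model $b_i+\sum_{m\in\mathcal S}w_i^{(m)}$ and shows that the $M+1$ observations uniquely force $w_i^{(m)}=\kappa_i^{(m)}$ and the intercept (you only verify consistency), and it states non-cancellation in the third free-form component through subadditivity, $\sum_{m\notin\mathcal R}[\kappa_i^{(m)}]_+\ge[E_i^{\mathcal I}]_+$; your caveat that exactness holds only on the observed conditions, with $P_i^\varnothing$ absorbing interaction terms, is correct and is left implicit in the paper.
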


\vspace{-1mm}
\noindent\textit{Proof.}
We provide experimental results in Sections~\ref{sec:ablation} and proofs in Appendix~\ref{app:interrogation_proof}.

\vspace{-1mm}
\subsection{Confession-Guided Correction}
\label{sec:correction}
\vspace{-1mm}

OmniConfess applies task-specific signals to judgment
recalibration and local free-form correction.

\noindent\textbf{Judgment Commitment Correction.}
For binary judgment, the candidate and its fixed
alternative form \(\mathcal Y=\{J^{\mathrm{cand}},a_d\}\).
OmniConfess adjusts only the candidate-class score:
\begin{equation}
\widetilde z_d(c)
\triangleq z_d(c)
+\mathbb I[c=J^{\mathrm{cand}}]\gamma(J^{\mathrm{cand}}),
\quad c\in\mathcal Y.
\label{eq:judgment_score_correction}
\end{equation}
where \(z_d(c)=\log p_\theta(c\mid
y_{<d}^{\mathrm{cand}},\mathbf X)\) is the
full-view class score and \(\mathbb I[\cdot]\)
the indicator function. The final judgment is
\(J^{\mathrm{final}}\triangleq
\arg\max_{c\in\mathcal Y}\widetilde z_d(c)\),
retaining the candidate in case of a tie.

\noindent\textbf{Free-Form Local Repair.}
For free-form generation, OmniConfess combines the
three confession signals into token-level revision risk:
\begin{equation}
\rho_i\triangleq
[-\gamma_{i,1}^{\mathrm{free}}]_+
+\gamma_{i,2}^{\mathrm{free}}
+\gamma_{i,3}^{\mathrm{free}},
\quad i=1,\ldots,n.
\label{eq:token_revision_score}
\end{equation}
where \(\gamma_{i,k}^{\mathrm{free}}\) denotes
component \(k\) of the free-form confession.
Tokens exceeding threshold \(\tau\) are expanded
to complete word boundaries and merged into
contiguous revision spans:
\begin{equation}
\mathcal B_{\mathrm{rev}}\triangleq
\operatorname{Merge}\!\left(
\operatorname{Expand}
\left(\{i:\rho_i>\tau\}\right)
\right)
=\{[s_j,e_j)\}_{j=1}^{J}.
\label{eq:revision_span_selection}
\end{equation}
where \(\mathcal B_{\mathrm{rev}}\) contains
\(J\) selected spans with start \(s_j\)
and exclusive end \(e_j\).
Each span is regenerated using the original
evidence and structured confession:
\begin{equation}
\begin{aligned}
u_j
&\triangleq
\operatorname{Rev}_{\theta}
\left(\mathbf X,y^{\mathrm{cand}},
\mathbf C_\theta(Q,T);[s_j,e_j)\right),\\[1pt]
y^{\mathrm{final}}
&\triangleq
\operatorname{Replace}\left(
y^{\mathrm{cand}},
\{[s_j,e_j)\mapsto u_j\}_{j=1}^{J}
\right).
\end{aligned}
\label{eq:local_span_revision}
\end{equation}
where \(\operatorname{Rev}_\theta\) generates replacement
\(u_j\) for span \(j\),
\(\operatorname{Replace}\) preserves content outside selected spans.

\begin{proposition}
Motivated by Figure~\ref{F3}(c), confession-guided
correction yields task-relevant judgment margins
and preserves content outside selected revision spans.
\end{proposition}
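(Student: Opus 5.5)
The statement has two parts, and I will handle them separately. Both rest on the definitions in Equations~\ref{eq:decision_margin}--\ref{eq:local_span_revision}, together with the algebraic identity of the previous proposition.

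\textbf{Judgment part.} I would first make ``task-relevant judgment margin'' precise. It should mean that, after correction, the margin of the candidate class over its fixed alternative at the decision position $d$ equals the relevant-evidence contribution $E_d^{\mathcal R}$.

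The calculation runs as follows. Since $\mathcal Y=\{J^{\mathrm{cand}},a_d\}$, Equation~\ref{eq:judgment_score_correction} gives
\[
\widetilde z_d(J^{\mathrm{cand}})-\widetilde z_d(a_d)=z_d(J^{\mathrm{cand}})-z_d(a_d)+\gamma(J^{\mathrm{cand}}).
\]
By Equation~\ref{eq:decision_margin}, $z_d(J^{\mathrm{cand}})-z_d(a_d)=F_d(\mathbf X)$. Substituting the definition of $\gamma$ from Equation~\ref{eq:task_confession} yields $F_d(\mathbf X)+E_d^{\mathcal R}-F_d(\mathbf X)=E_d^{\mathcal R}$.

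I would then check the second form of $\gamma$ in Equation~\ref{eq:task_confession}. Setting $\mathcal S=\{1,\ldots,M\}$ in Equation~\ref{eq:evidence_decomposition} gives $F_d(\mathbf X)=P_d^\varnothing+E_d^{\mathcal R}+E_d^{\mathcal I}$, since $\mathcal R$ and $\mathcal I$ partition the channels. Hence $E_d^{\mathcal R}=\widehat F_d(\mathcal R)-P_d^\varnothing$, and the corrected margin strips away both the residual prior preference and the irrelevant-evidence contribution.

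It follows that $J^{\mathrm{final}}=J^{\mathrm{cand}}$ if and only if $E_d^{\mathcal R}\ge 0$, because ties retain the candidate. The decision is therefore determined by relevant evidence alone. I would add one remark: the result is exact at the level of the additive surrogate. It does not claim that the true margin obtained by removing all of $\mathcal I$ equals $E_d^{\mathcal R}$, which would require additivity of the $\kappa$'s.

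\textbf{Free-form part.} Here I would show that $y^{\mathrm{final}}_k=y^{\mathrm{cand}}_k$ for every position $k$ outside $\bigcup_j[s_j,e_j)$. Three steps establish this:
\begin{enumerate}[leftmargin=*]
\item $\operatorname{Expand}$ and $\operatorname{Merge}$ produce a finite family of disjoint half-open intervals in $\{1,\ldots,n\}$, which I would sort by $s_j$.
\item $\operatorname{Replace}$ is defined as concatenation: the segments of $y^{\mathrm{cand}}$ between consecutive spans, alternating with the replacements $u_j$.
\item Induction on $j$ then shows that each untouched segment appears verbatim and in its original order.
\end{enumerate}
Because the replacements $u_j$ may differ in length from their spans, the statement should be phrased in terms of this segment correspondence rather than equality of absolute positions. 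In addition, any token with $\rho_i\le\tau$ that is not absorbed by word-boundary expansion is preserved.

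\textbf{Main obstacle.} The main difficulty is not computational but definitional. $\operatorname{Expand}$, $\operatorname{Merge}$ and $\operatorname{Replace}$ are described only informally. I would fix formal definitions: disjoint intervals, and replacement by concatenation with length-changing spans allowed. I would also state explicitly that the preservation guarantee concerns the complement of the expanded spans, not the complement of $\{i:\rho_i>\tau\}$. Once these definitions are fixed, both claims reduce to the identities above.
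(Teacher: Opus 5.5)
Your proposal is correct and matches the paper's proof. The judgment part is the same substitution of $\gamma(J^{\mathrm{cand}})$ into the full-view margin $F_d(\mathbf X)=P_d^\varnothing+E_d^{\mathcal R}+E_d^{\mathcal I}$, giving corrected margin $E_d^{\mathcal R}$ and the tie-retaining rule $J^{\mathrm{final}}=J^{\mathrm{cand}}$ iff $E_d^{\mathcal R}\ge 0$; the free-form part is the same partition $y^{\mathrm{cand}}=t_0\oplus b_1\oplus t_1\oplus\cdots\oplus b_J\oplus t_J$ sent by $\operatorname{Replace}$ to $t_0\oplus u_1\oplus t_1\oplus\cdots\oplus u_J\oplus t_J$, and the paper's extra non-canceling bound $\rho_i\ge[E_i^{\mathcal I}]_+$ is not needed for the two stated claims (one small fix to your side remark: even under exact additivity, the true margin with all of $\mathcal I$ removed would be $P_d^\varnothing+E_d^{\mathcal R}$, not $E_d^{\mathcal R}$).
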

\vspace{-1mm}
\noindent\textit{Proof.}
We provide experimental results in Sections~\ref{sec:ablation} and proofs in Appendix~\ref{app:correction_proof}.
\vspace{-1mm}
\begin{table*}[t!]
\centering
\caption{Main results on \textsc{OmniHalluBench} across three
backbones. Best and second-best are in bold and
underlined, respectively. Avg. excludes missing entries;
--- denotes unavailable results.}
\label{tab:fullgrid}
\scriptsize
\setlength{\tabcolsep}{3.8pt}
\renewcommand{\arraystretch}{1.22}
\begin{tabular}{l rr rr rr rr rr rr | rrr}
\toprule
& \multicolumn{6}{c}{\hdr{Judgment tasks}} & \multicolumn{6}{c}{\hdr{Free-form tasks}} & \multicolumn{3}{c}{\hdr{Avg.}} \\
\cmidrule(lr){2-7}\cmidrule(lr){8-13}\cmidrule(lr){14-16}
& \multicolumn{2}{c}{\hdr{PHD}} & \multicolumn{2}{c}{\hdr{CMM}} & \multicolumn{2}{c}{\hdr{PubMed}} & \multicolumn{2}{c}{\hdr{RAGTruth}} & \multicolumn{2}{c}{\hdr{HaloQ.}} & \multicolumn{2}{c}{\hdr{AVHBch}} & & & \\
\cmidrule(lr){2-3}\cmidrule(lr){4-5}\cmidrule(lr){6-7}\cmidrule(lr){8-9}\cmidrule(lr){10-11}\cmidrule(lr){12-13}
\hdr{Method} & \hdr{F1} & \hdr{Idx} & \hdr{F1} & \hdr{Idx} & \hdr{F1} & \hdr{Idx} & \hdr{F1} & \hdr{GAV} & \hdr{F1} & \hdr{GAV} & \hdr{F1} & \hdr{GAV} & \hdr{F1} & \hdr{Idx} & \hdr{GAV} \\
\midrule
\multicolumn{16}{c}{\textbf{\textit{Qwen2.5-Omni-7B}}} \\
\addlinespace[2pt]
\rowcolor{catBase} Base & 71.68 & 68.35 & 70.07 & 60.77 & 95.14 & 80.21 & 38.25 & 8.77 & 34.22 & 7.16 & 19.42 & 5.96 & 54.80 & 69.78 & 7.30 \\
\rowcolor{catSearch} SC & 76.43 & 73.18 & 71.96 & 71.31 & 91.25 & 68.29 & 35.38 & \underline{8.89} & 34.48 & 7.14 & 18.29 & 6.04 & 54.63 & 70.93 & 7.36 \\
\rowcolor{catSearch} GD & 21.16 & 21.73 & 57.54 & 63.12 & 57.37 & 39.35 & 11.61 & 8.59 & 40.35 & 6.57 & 9.80 & 4.36 & 32.97 & 41.40 & 6.51 \\
\rowcolor{catSearch} PD & 32.12 & 33.01 & 58.09 & 63.61 & 79.35 & 35.20 & 12.72 & 8.57 & 38.09 & 5.43 & 10.85 & 4.76 & 38.54 & 43.94 & 6.25 \\
\rowcolor{catSearch} ToT & 70.25 & 69.07 & 67.85 & \underline{71.91} & 68.07 & 43.67 & 12.08 & 8.56 & 42.09 & \underline{7.38} & 19.46 & \underline{6.20} & 46.64 & 61.55 & \underline{7.38} \\
\rowcolor{catEvid} VCD & 72.05 & 68.64 & 69.90 & 60.32 & 95.14 & 80.21 & 38.40 & 8.69 & 34.62 & 7.09 & 19.26 & 5.99 & 54.90 & 69.72 & 7.26 \\
\rowcolor{catEvid} AVCD & 73.81 & 66.62 & 69.08 & 61.93 & 94.79 & 86.94 & 33.78 & 8.40 & 33.25 & 5.28 & 19.76 & 5.89 & 54.08 & 71.83 & 6.52 \\
\rowcolor{catEvid} CAD & \underline{79.40} & \underline{73.82} & 67.37 & 53.46 & \underline{95.24} & 83.46 & 36.29 & 8.79 & 36.06 & 6.54 & 18.01 & 5.94 & \underline{55.40} & 70.25 & 7.09 \\
\rowcolor{catEvid} MAD & --- & --- & \underline{72.80} & 65.86 & --- & --- & --- & --- & --- & --- & 16.84 & 5.88 & 44.82 & 65.86 & 5.88 \\
\rowcolor{catFine} DoLa & 72.30 & 69.57 & 70.16 & 60.99 & 93.52 & 72.09 & \underline{38.74} & 8.75 & 33.62 & 7.05 & 19.15 & 6.09 & 54.58 & 67.55 & 7.30 \\
\rowcolor{catFine} OPERA & 74.33 & 65.71 & 58.13 & 62.89 & 94.91 & \underline{87.93} & 36.36 & 8.53 & 35.91 & 5.37 & \underline{21.11} & 6.15 & 53.46 & \underline{72.18} & 6.68 \\
\rowcolor{catConfess} \textbf{OmniConfess} & \textbf{89.60} & \textbf{78.46} & \textbf{82.15} & \textbf{84.01} & \textbf{97.26} & \textbf{92.38} & \textbf{55.58} & \textbf{8.95} & \textbf{43.14} & \textbf{8.65} & \textbf{22.87} & \textbf{6.38} & \textbf{65.10} & \textbf{84.94} & \textbf{7.99} \\
\midrule
\multicolumn{16}{c}{\textbf{\textit{Qwen3-Omni-30B-A3B}}} \\
\addlinespace[2pt]
\rowcolor{catBase} Base & 83.40 & 77.24 & 74.56 & 73.69 & 93.93 & 71.02 & 36.30 & 8.85 & 33.38 & 7.25 & 19.03 & 6.28 & 56.77 & 73.98 & 7.46 \\
\rowcolor{catSearch} SC & 85.19 & \underline{78.99} & \textbf{80.72} & \textbf{83.39} & 94.76 & 65.82 & 37.33 & 8.98 & 27.37 & 7.34 & 16.78 & 6.24 & 57.03 & 76.07 & \underline{7.52} \\
\rowcolor{catSearch} GD & 38.66 & 39.46 & 58.97 & 64.89 & 77.22 & 28.46 & 22.42 & 8.08 & 35.19 & 6.37 & 7.46 & 4.32 & 39.99 & 44.27 & 6.26 \\
\rowcolor{catSearch} PD & 40.59 & 41.30 & 58.50 & 63.21 & 67.85 & 34.70 & 19.90 & 8.15 & 31.71 & 6.87 & 7.03 & 3.66 & 37.60 & 46.40 & 6.23 \\
\rowcolor{catSearch} ToT & 83.51 & 78.26 & 77.54 & 79.93 & 83.38 & 31.50 & 22.14 & 8.07 & \textbf{41.50} & \underline{7.57} & 18.59 & 6.43 & 54.44 & 63.23 & 7.36 \\
\rowcolor{catEvid} VCD & 84.47 & 77.65 & 73.68 & 72.39 & 93.93 & 71.02 & 35.73 & 8.96 & 33.65 & 6.77 & 19.07 & 6.23 & 56.76 & 73.69 & 7.32 \\
\rowcolor{catEvid} AVCD & 84.54 & 77.84 & 74.32 & 73.12 & 93.40 & 70.74 & 36.19 & \underline{9.01} & 33.21 & 6.84 & 19.14 & 6.19 & 56.80 & 73.90 & 7.35 \\
\rowcolor{catEvid} CAD & \underline{85.71} & 78.00 & 78.65 & 80.16 & \underline{94.88} & \underline{80.03} & 38.46 & 8.96 & 26.76 & 7.41 & 17.74 & 6.16 & 57.03 & 79.40 & 7.51 \\
\rowcolor{catEvid} MAD & --- & --- & 78.21 & 80.84 & --- & --- & --- & --- & --- & --- & 21.49 & 6.30 & 49.85 & \underline{80.84} & 6.30 \\
\rowcolor{catFine} DoLa & 83.91 & 78.01 & 73.71 & 71.24 & 93.12 & 71.87 & 38.14 & 8.96 & 31.43 & 6.74 & 18.81 & 6.21 & 56.52 & 73.71 & 7.30 \\
\rowcolor{catFine} OPERA & 80.11 & 76.11 & 74.30 & 76.24 & 94.15 & 74.94 & \underline{41.30} & 8.94 & 37.35 & 6.95 & \textbf{21.74} & \underline{6.50} & \underline{58.16} & 75.76 & 7.46 \\
\rowcolor{catConfess} \textbf{OmniConfess} & \textbf{87.47} & \textbf{79.10} & \underline{80.24} & \underline{81.11} & \textbf{96.76} & \textbf{83.84} & \textbf{53.20} & \textbf{9.06} & \underline{39.32} & \textbf{8.14} & \underline{21.66} & \textbf{6.58} & \textbf{63.11} & \textbf{81.35} & \textbf{7.93} \\
\midrule
\multicolumn{16}{c}{\textbf{\textit{Nemotron-3-Nano-Omni-30B}}} \\
\addlinespace[2pt]
\rowcolor{catBase} Base & 83.14 & 74.98 & 76.36 & 79.46 & 78.62 & 66.94 & \underline{45.10} & --- & 16.48 & --- & 18.67 & --- & 53.06 & 73.79 & --- \\
\rowcolor{catSearch} SC & 84.68 & 76.06 & 78.82 & 80.57 & 72.62 & 58.72 & 42.10 & 9.02 & 16.37 & 6.68 & 17.39 & 5.94 & 52.00 & 71.78 & 7.21 \\
\rowcolor{catSearch} GD & \textbf{86.36} & \textbf{81.23} & 76.06 & 78.55 & 23.72 & 19.61 & 26.89 & \underline{9.11} & \underline{35.64} & \underline{8.28} & 16.63 & 5.58 & 44.22 & 53.26 & \underline{7.66} \\
\rowcolor{catSearch} PD & 82.54 & 79.05 & 77.61 & 80.00 & 24.08 & 23.72 & 26.75 & \textbf{9.12} & 30.43 & 7.85 & 17.29 & 5.97 & 43.13 & 54.38 & 7.65 \\
\rowcolor{catSearch} ToT & \underline{85.50} & \underline{80.53} & \textbf{80.56} & \textbf{82.54} & 37.93 & 21.36 & 26.48 & 9.09 & 30.40 & 7.53 & 16.47 & 5.60 & 46.22 & 61.48 & 7.41 \\
\rowcolor{catEvid} VCD & 83.80 & 74.99 & 76.83 & 79.65 & 72.20 & 55.08 & 43.21 & 9.06 & 24.32 & 6.09 & 21.19 & \underline{6.25} & 53.59 & 69.91 & 7.13 \\
\rowcolor{catEvid} AVCD & 59.03 & 53.49 & 56.94 & 40.02 & 69.84 & 63.76 & 24.92 & 9.01 & 25.21 & 6.20 & \textbf{22.22} & 6.21 & 43.03 & 52.42 & 7.14 \\
\rowcolor{catEvid} CAD & 83.04 & 74.91 & 78.71 & 80.43 & \underline{82.66} & \underline{69.71} & 44.08 & 8.96 & 32.04 & 7.03 & 19.63 & 6.24 & \underline{56.69} & \underline{75.02} & 7.41 \\
\rowcolor{catEvid} MAD & --- & --- & 73.61 & 74.70 & --- & --- & --- & --- & --- & --- & \underline{22.10} & \textbf{6.30} & 47.86 & 74.70 & 6.30 \\
\rowcolor{catFine} DoLa & 83.42 & 74.72 & 73.72 & 77.26 & 62.66 & 55.45 & 44.37 & 8.99 & 22.14 & 5.92 & 21.20 & 6.18 & 51.25 & 69.14 & 7.03 \\
\rowcolor{catFine} OPERA & 58.84 & 58.24 & 76.60 & 79.38 & 77.23 & 63.40 & 44.90 & 8.96 & 30.23 & 6.58 & 20.04 & 6.04 & 51.31 & 67.01 & 7.19 \\
\rowcolor{catConfess} \textbf{OmniConfess} & 84.66 & 75.58 & \underline{80.50} & \underline{81.95} & \textbf{90.63} & \textbf{71.65} & \textbf{51.71} & 9.12 & \textbf{40.75} & \textbf{8.57} & 21.14 & \underline{6.25} & \textbf{61.57} & \textbf{76.39} & \textbf{7.88} \\
\bottomrule
\end{tabular}
\end{table*}

\section{Experiments}
\label{sec:experiments}

We answer six research questions:
\textbf{RQ1:} Does OmniConfess outperform existing methods?
\textbf{RQ2:} Does each stage of OmniConfess work?
\textbf{RQ3--5:} How are semantic
quality, cross-backbone generalization, and inference efficiency?

\subsection{Experimental Setup}

  \noindent\textbf{Datasets.}
We construct \textsc{OmniHalluBench}, comprising
3,540 examples from six existing benchmarks.
Judgment tasks include \textbf{CMM}~\citep{leng2025cmm}
(Audio--Visual--Text), \textbf{PhD}~\citep{liu2025phd}
(Image--Text), and \textbf{PubMedQA}~\citep{jin2019pubmedqa}
(Text). Free-form tasks include
\textbf{AVHBench}~\citep{sungbin2025avhbench}
(Audio--Visual--Text),
\textbf{HaloQuest}~\citep{tong2024haloquest}
(Image--Text), and
\textbf{RAGTruth}~\citep{niu2024ragtruth}
(Text). Construction details are provided in Appendix~\ref{app:dataset}.

  \noindent\textbf{Backbones and Baselines.}
  We evaluate OmniConfess on three OmniLLMs:
 \texttt{Qwen2.5-\allowbreak{}Omni-\allowbreak{}7B}~\citep{xu2025qwen25omni}
  (default),
  \texttt{Qwen3-\allowbreak{}Omni-\allowbreak{}30B-\allowbreak{}A3B}~\citep{xu2025qwen3omni},
  and \texttt{Nemotron-3-\allowbreak{}Nano-\allowbreak{}Omni-\allowbreak{}30B}~\citep{nvidia2026nemotron}.
  Baselines include search-based methods
  (SC~\citep{wang2023selfconsistency},
  GD~\citep{xie2023selfeval},
  PD~\citep{xu2025phidecoding},
  ToT~\citep{yao2023tree}),
  evidence-contrast methods
  (VCD~\citep{leng2024vcd},
  AVCD~\citep{jung2025avcd},
  CAD~\citep{shi2024trusting},
  MAD~\citep{chung2026mad}),
  and fine-grained methods
  (DoLa~\citep{chuang2024dola},
  OPERA~\citep{huang2024opera}),
  alongside the base model.
  Further details are in Appendix~\ref{app:baselines}.

  \noindent\textbf{Evaluation Metrics.}
  Judgment tasks use F1 and
  PhD-Index (Idx)~\citep{liu2025phd};
  free-form tasks use Token-F1 and
  GAVIE-inspired accuracy (GAV) (GAV)~\citep{liu2024mitigating},
  scored by an LLM judge on a 0--10 scale
using our task-specific evaluation prompt.
  Average F1 covers six datasets, while
  average Idx and GAV each cover three.For hallucination localization, we report
span-level AUPRC.
  Metric definitions are in
  Appendix~\ref{app:evaluation}.

  \noindent\textbf{Implementation Details.}
  OmniConfess requires no additional training.
  All experiments are conducted on 4 NVIDIA A40
  GPUs.

\begin{figure*}[t]
\centering

\begin{subfigure}[t]{0.52\textwidth}
    \centering
    \vspace{0pt}

    \scriptsize
    \setlength{\tabcolsep}{5pt}
    \renewcommand{\arraystretch}{1.15}

    \resizebox{\linewidth}{!}{%
    \begin{tabular}{lcccc|ccc}
        \toprule
        \multirow{2}{*}{\textbf{Configuration}}
        & \multicolumn{2}{c}{\textbf{Judgment Avg.}}
        & \multicolumn{2}{c|}{\textbf{HaloQuest}}
        & \multicolumn{3}{c}{\textbf{Avg.}} \\
        \cmidrule(lr){2-3}
        \cmidrule(lr){4-5}
        \cmidrule(lr){6-8}
        & \textbf{F1}
        & \textbf{Idx}
        & \textbf{F1}
        & \textbf{GAV}
        & \textbf{F1}
        & \textbf{Idx}
        & \textbf{GAV} \\
        \midrule
        w/o Anchoring
        & 80.72 & 76.74 & 39.76 & 8.12
        & 67.06 & 76.74 & 8.12 \\

        w/o Interrogation
        & 77.25 & 73.42 & 37.38 & 7.83
        & 63.96 & 73.42 & 7.83 \\

        w/o Correction
        & 70.88 & 64.56 & 34.22 & 7.16
        & 58.66 & 64.56 & 7.16 \\

        \rowcolor{catConfess}
        \textbf{OmniConfess}
        & \textbf{85.88} & \textbf{81.24}
        & \textbf{43.14} & \textbf{8.65}
        & \textbf{71.63} & \textbf{81.24}
        & \textbf{8.65} \\
        \bottomrule
    \end{tabular}%
    }

    \vspace{4.5mm}
    \caption{Ablation Study}
    \label{f5a}
\end{subfigure}
\hfill
\begin{subfigure}[t]{0.46\textwidth}
    \centering
    \vspace{0pt}

    \includegraphics[width=\linewidth]{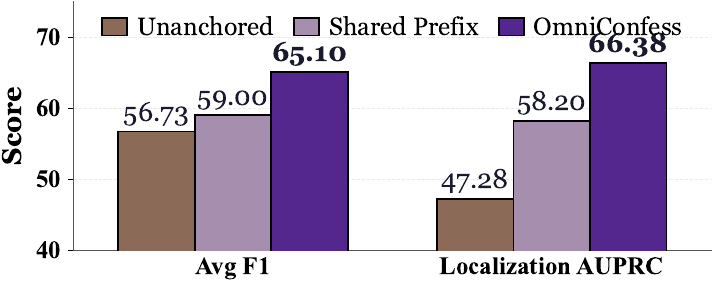}

    \caption{Anchoring Analysis}
    \label{f5b}
\end{subfigure}

\par\vspace{2mm}

\begin{subfigure}[t]{0.36\textwidth}
    \centering
    \vspace{0pt}

    \includegraphics[width=\linewidth]{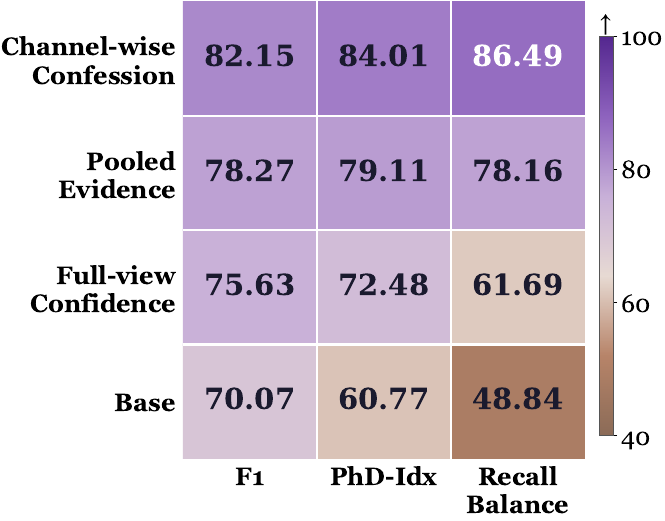}

    \vspace{1.5mm}
    \caption{Evidence Interrogation}
    \label{f5c}
\end{subfigure}
\hfill
\begin{subfigure}[t]{0.59\textwidth}
    \centering
    \vspace{0pt}

    \includegraphics[width=\linewidth]{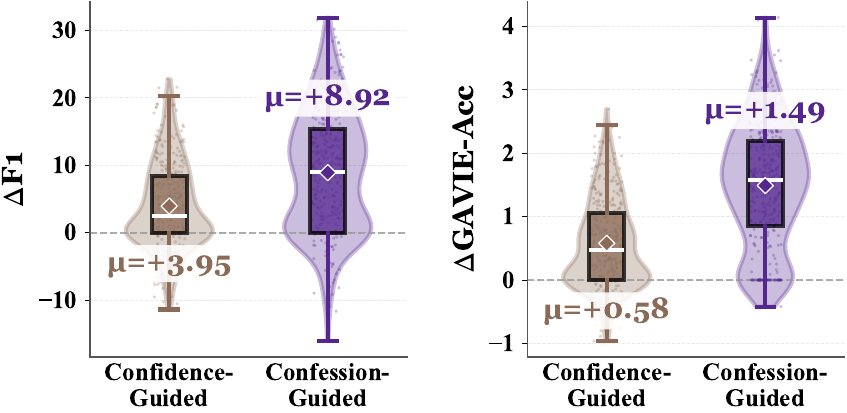}

    \vspace{1mm}
    \caption{Correction Strategies}
    \label{f5d}
\end{subfigure}

\caption{
(a) Module ablation of OmniConfess.
(b) Comparison of commitment anchoring strategies.
(c) Comparison of evidence interrogation strategies.
(d) Comparison of correction strategies.
}
\label{fig:F5}

\vspace{-1mm}
\end{figure*}
\vspace{-2mm}
\subsection{Main Results (RQ1) }
\vspace{-2mm}

\label{sec:eff}
As shown in Table~\ref{tab:fullgrid}, OmniConfess
consistently outperforms all baselines across
six datasets and all reported metrics on
Qwen2.5-Omni-7B. We further identify three
key observations.

\noindent\textbf{Hallucination Control Across
Evidence Modalities.}
OmniConfess consistently improves multimodal and text-only tasks.
On PHD and CMM, F1 exceeds the strongest baselines by 10.20 and
9.35 points, suggesting channel-wise evidence interrogation curbs
reliance on irrelevant channels. On RAGTruth, Token-F1 gains 16.84
points, suggesting confession mitigates underuse of relevant evidence.
Despite PubMedQA's strong baseline, F1 rises by 2.02 points.

\noindent\textbf{Hallucination Control Across
Generation Tasks.}
For judgment tasks, improvements in F1 and Index
indicate gains in decision accuracy and reduced
hallucination-related bias. For free-form generation,
gains in Token-F1 and GAVIE reflect gains in lexical
consistency and factual accuracy. These results show
token-level evidence correction benefits both judgment
and generation.

\noindent\textbf{Behavior of Existing Inference-Time Methods.}
Search-based methods show task sensitivity, with
search or iteration sometimes degrading performance
under multimodal evidence.
Evidence-contrast methods remain close to the base model,
as modality interventions may suppress evidence
along with distractions.
Fine-grained methods lack improvements, suggesting
token-level risk signals may not determine whether
evidence dependence should be strengthened or reduced.

\subsection{Ablation Study and Comparative Analysis (RQ2)}
\label{sec:ablation}

Figure~\ref{fig:F5} presents the ablation study and comparative analysis of OmniConfess.

\noindent\textbf{Module Ablation.}
Figure~\ref{fig:F5}\subref{f5a} ablates anchoring, evidence
interrogation, and correction. OmniConfess leads all metrics,
averaging 71.63 F1. Removing anchoring, interrogation, or correction
lowers average F1 to 67.06, 63.96, or 58.66, respectively.
All modules contribute; correction's removal has the largest effect,
underscoring the value of revision after identifying evidence-related
errors.

\noindent\textbf{Anchoring Analysis.}
Figure~\ref{fig:F5}\subref{f5b} compares the unanchored setting, shared-prefix anchoring \citep{shi2024trusting}, and OmniConfess. Shared-prefix anchoring increases average F1 from 56.73 to 59.00 and localization AUPRC from 47.28 to 58.20. OmniConfess improves them to 65.10 and 66.38, respectively. Thus, anchoring benefits both prediction and error localization, and the anchoring used by OmniConfess is more effective than a shared prefix alone.

\noindent\textbf{Evidence Interrogation Strategies.}
Figure~\ref{fig:F5}\subref{f5c} compares full-view confidence \citep{ding-etal-2025-lvlms}, pooled evidence \citep{chung2026mad}, and channel-wise confession. It leads in F1 (82.15) and PhD-Index (84.01). We define $\mathrm{Recall\ Balance}=100-|R_{\mathrm{YES}}-R_{\mathrm{NO}}|$, with recall on a 0--100 scale; higher values indicate a smaller YES--NO gap. Channel-wise confession scores 86.49, versus 78.16 for pooled evidence, 61.69 for full-view confidence, and 48.84 for the base model. This reduction helps mitigate the model's bias toward predicting YES for NO-labeled samples.

\noindent\textbf{Correction Strategies.}
Figure~\ref{fig:F5}\subref{f5d} compares revision strategies under an equal correction budget. Confession-guided revision yields larger mean gains than confidence-guided revision in F1 ($+8.92$ vs.\ $+3.95$) and GAVIE-Acc ($+1.49$ vs.\ $+0.58$). This supports using token- and channel-level evidence dependence to locate spans for correction.

\begin{figure*}[t]
    \centering

    \begin{subfigure}[t]{0.47\textwidth}
        \centering
        \vspace{0pt}
        \includegraphics[width=\linewidth]{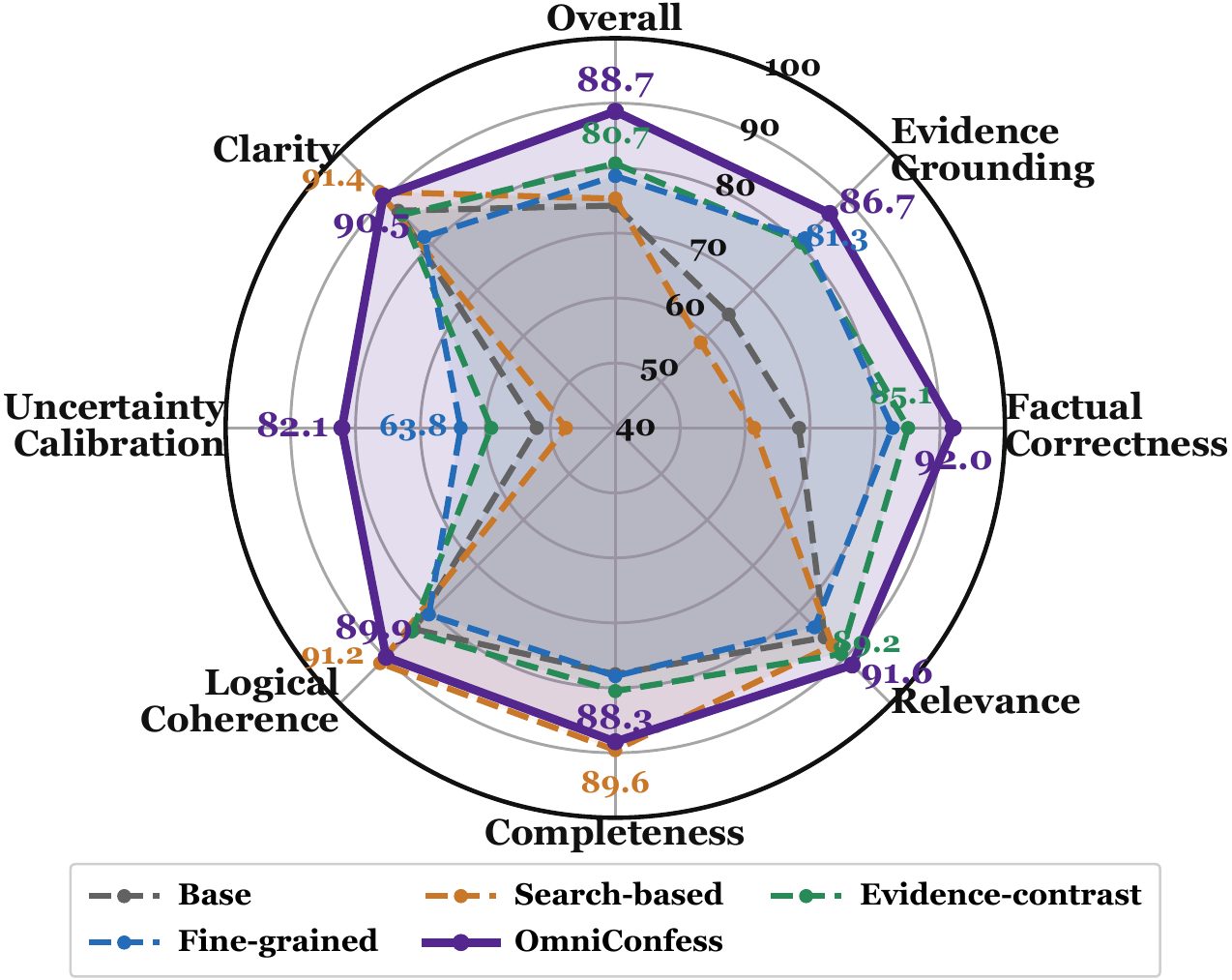}

        \vspace{1.5mm}
        \caption{Generation Quality Comparison}
        \label{f6a}
    \end{subfigure}
    \hfill
    \begin{subfigure}[t]{0.52\textwidth}
        \centering
        \vspace{0pt}
        \includegraphics[width=\linewidth]{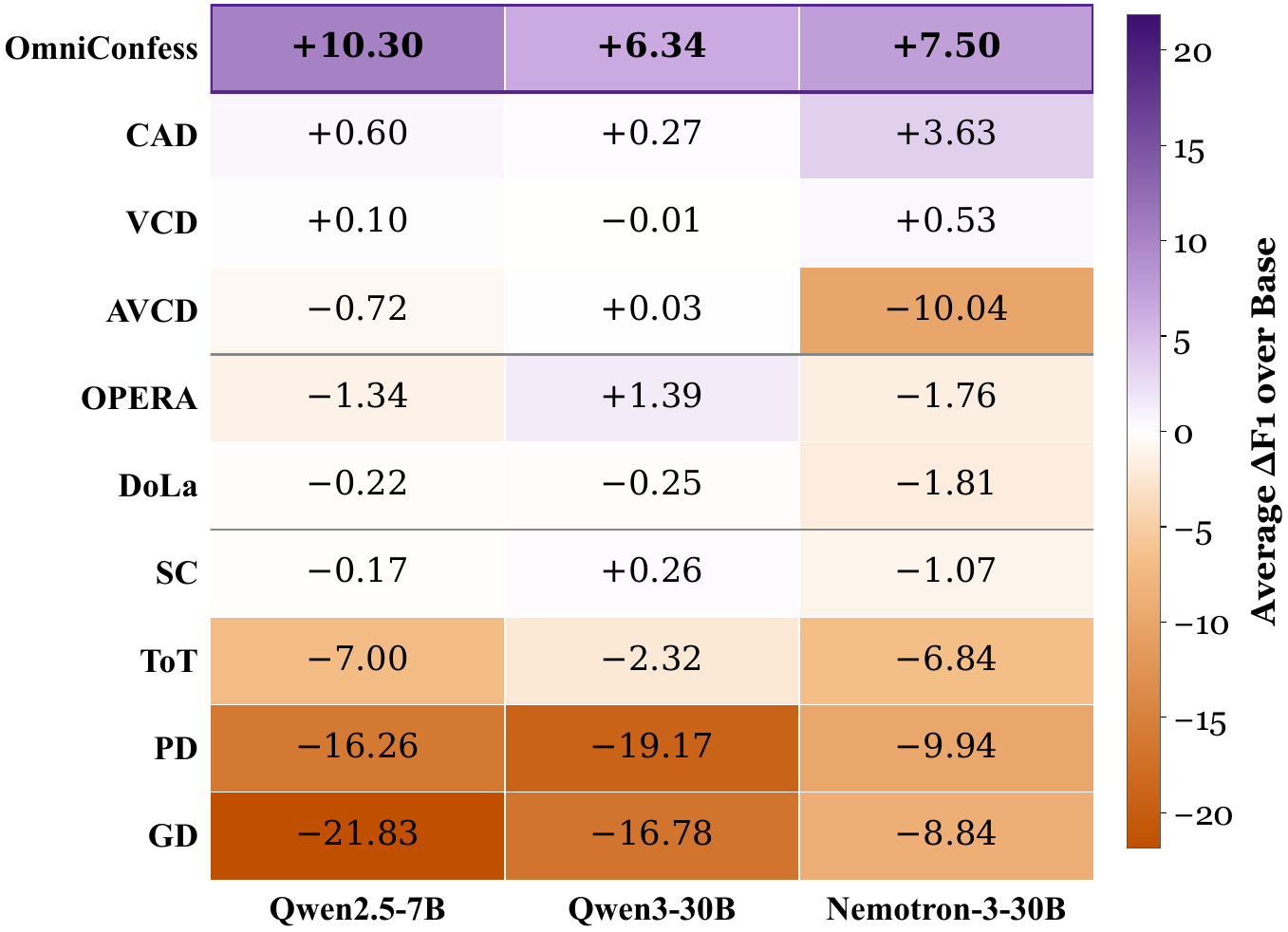}

        \vspace{1mm}
        \caption{Cross-Backbone Performance}
        \label{f6b}
    \end{subfigure}

    \caption{Generation quality and cross-backbone portability of OmniConfess.
    (a) Comparison of OmniConfess and baseline methods across
    generation quality dimensions.
    (b) Average F1 changes relative to each backbone's base model
    across three architectures.}
    \label{F6}
    \vspace{-2mm}
\end{figure*}

\vspace{-1mm}
\subsection{Analysis of OmniConfess's Impact in Generation Quality (RQ3)}
\label{sec:mechanism}
\vspace{-1mm}

To assess generation quality beyond task metrics, we use
GPT-4o~\citep{openai2024gpt4o} and human experts to cross-evaluate
seven quality dimensions and an Overall score.
Figure~\ref{F6}\subref{f6a} compares inference-time baselines
with OmniConfess. The prompt and protocol are detailed in Appendix~\ref{app:quality_prompts}.

\noindent\textbf{Baseline Analysis.}
Search-based methods retain clarity and coherence but lag in grounding
and factual correctness, indicating that refinement need not improve
evidential support. Evidence-contrast methods offer balanced quality,
whereas Fine-grained interventions strengthen grounding but sacrifice
linguistic quality. Both remain weak in uncertainty calibration.

\noindent\textbf{OmniConfess Analysis.}
OmniConfess achieves the highest Overall score (88.7), with Evidence
Grounding (86.7), Factual Correctness (92.0), and Uncertainty Calibration
(82.1) exceeding the strongest baselines by 5.4, 6.9, and 18.3 points,
respectively. Its competitive clarity and coherence, alongside markedly
improved uncertainty calibration, suggest that token- and channel-level
evidence interrogation supports reliable correction without sacrificing
response quality.

\subsection{Analysis of OmniConfess's Portability Across Backbone Models (RQ4)}
\label{sec:cross_backbone_portability}

\begin{figure}[t]
  \centering
  \captionsetup{skip=3pt}
  \includegraphics[width=\linewidth]{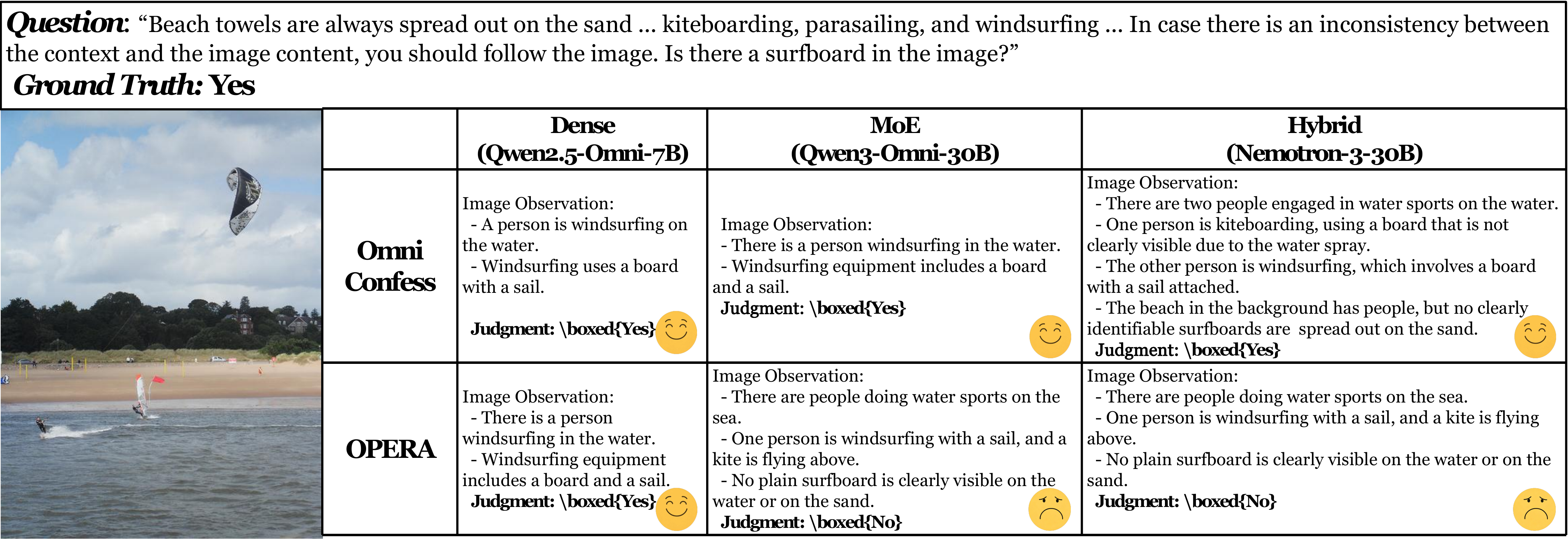}
  \caption{Case study of OmniConfess and OPERA across three backbones on PHD.}
  \label{F7}
  \vspace{-2mm}
\end{figure}

To assess architectural generalizability, we evaluate methods
on Qwen2.5-Omni-7B (Dense), Qwen3-Omni-30B-A3B (MoE), and
Nemotron-3-Nano-Omni-30B (Hybrid). Figure~\ref{F6}\subref{f6b}
shows that OmniConfess improves average F1 over their respective
base models by 10.30, 6.34, and 7.50 points.

Unlike OmniConfess, baseline methods either vary across backbones
or perform consistently poorly. AVCD, OPERA, and SC switch between
gains and losses, improving only on MoE but declining on Dense and
Hybrid, while GD, PD, and ToT reduce F1 on all three backbones.
Even CAD, which improves all three, yields smaller gains than
OmniConfess. These results highlight OmniConfess's consistent
advantage across Dense, MoE, and Hybrid architectures.

\noindent\textbf{Case Study.}
Figure~\ref{F7} shows larger backbones do not guarantee better
grounding. OPERA succeeds on Dense, but its longer reasoning on
30B backbones narrows the target category and overrides visual
evidence. Architecture-dependent reasoning may amplify multimodal
hallucinations. OmniConfess remains correct across architectures, showing its robustness.

\vspace{-1mm}
\subsection{Analysis of OmniConfess's Inference Efficiency (RQ5)}
\label{sec:efficiency}
\vspace{-1mm}

We evaluate inference time, generated tokens, and normalized
throughput in Table~\ref{tab:inference_efficiency}.

\noindent\textbf{Evidence Interrogation Overhead.}
Judgment outputs are short, yet OmniConfess takes 5.43 seconds on
PHD and 35.14 seconds on audio-visual CMM, compared with 2.23 and
20.64 seconds for BaseModel. CMM generates only 24.6 tokens,
suggesting that repeated channel-wise evidence rescoring, rather
than output length alone, accounts for much of its latency.

\noindent\textbf{Token--Latency Trade-off.}
Compared with search-based methods, OmniConfess uses about
27$\times$ and 22$\times$ fewer generated tokens on PHD and CMM,
respectively, but reduces inference time by only 6.4$\times$ and
1.1$\times$. On AVHBench, generated tokens fall from 569.5 to 164.1,
while time falls from 36.15 to 7.93 seconds. Anchoring a candidate
limits generation cost, but multimodal evidence interrogation still
incurs substantial latency on some tasks.

\newcommand{\effcell}[3]{%
  #1\,\textperiodcentered\,#2\,\textperiodcentered\,#3%
}

\begin{table*}[t]
\centering
\captionsetup{skip=3pt}
\caption{Time, output token consumption, and normalized throughput (tokens/sec).}
\label{tab:inference_efficiency}

\scriptsize
\setlength{\tabcolsep}{5pt}
\renewcommand{\arraystretch}{1.15}

\resizebox{\textwidth}{!}{%
\begin{tabular}{lcccccc}
\toprule
\textbf{Category}
& \textbf{PHD}
& \textbf{PubMedQA}
& \textbf{CMM}
& \textbf{RAGTruth}
& \textbf{HaloQuest}
& \textbf{AVHBench} \\

& Time\,\textperiodcentered\,Tokens\,\textperiodcentered\,Tok/s
& Time\,\textperiodcentered\,Tokens\,\textperiodcentered\,Tok/s
& Time\,\textperiodcentered\,Tokens\,\textperiodcentered\,Tok/s
& Time\,\textperiodcentered\,Tokens\,\textperiodcentered\,Tok/s
& Time\,\textperiodcentered\,Tokens\,\textperiodcentered\,Tok/s
& Time\,\textperiodcentered\,Tokens\,\textperiodcentered\,Tok/s \\
\midrule

\rowcolor{catBase}
BaseModel
& \effcell{2.23}{23.4}{10.5}
& \effcell{0.81}{25}{30.9}
& \effcell{20.64}{22.7}{1.1}
& \effcell{2.69}{98}{36.4}
& \effcell{0.93}{10}{10.8}
& \effcell{2.65}{25}{9.4} \\

\rowcolor{catSearch}
Search-based
& \effcell{34.76}{789.1}{22.7}
& \effcell{20.28}{474.8}{23.4}
& \effcell{38.90}{536.8}{13.8}
& \effcell{30.10}{838.3}{27.9}
& \effcell{28.39}{536.3}{18.9}
& \effcell{36.15}{569.5}{15.8} \\

\rowcolor{catEvid}
Evidence-contrast
& \effcell{7.17}{21.5}{3.0}
& \effcell{1.19}{26.3}{22.1}
& \effcell{25.44}{22.9}{0.9}
& \effcell{2.49}{69.4}{27.9}
& \effcell{1.32}{10.5}{8.0}
& \effcell{4.09}{22.1}{5.4} \\

\rowcolor{catFine}
Fine-grained
& \effcell{4.17}{26.7}{6.4}
& \effcell{3.71}{25.0}{6.7}
& \effcell{15.18}{25.8}{1.7}
& \effcell{6.48}{65.7}{10.1}
& \effcell{1.83}{10.1}{5.5}
& \effcell{5.01}{24.9}{5.0} \\

\rowcolor{catConfess}
\textbf{OmniConfess}
& \effcell{5.43}{29.3}{5.4}
& \effcell{1.23}{25}{20.3}
& \effcell{35.14}{24.6}{0.7}
& \effcell{4.82}{89.2}{18.5}
& \effcell{0.82}{12.5}{15.2}
& \effcell{7.93}{164.1}{20.7} \\

\bottomrule
\end{tabular}%
}

\end{table*}
\vspace{3pt}

\vspace{-1mm}
\section{Discussion and Conclusion}
\label{sec:conclusion}
\vspace{-1mm}

In this work, we propose OmniConfess, a training-free framework
for mitigating omni-modal hallucinations through token- and
channel-level evidence interrogation. Structured confessions expose
evidence dependencies and guide correction in both judgment and
free-form generation. Experiments across six benchmarks show improved
factual reliability and evidence grounding, with F1 gains of up to
13.3 points and consistent gains across backbone architectures.

\newpage

\section*{AI Use Statement}

We used generative AI tools to assist with English translation and
polishing the manuscript, including the presentation of experimental
results. The authors reviewed the AI-assisted text, verified its claims
against the experimental records, and take responsibility for the
final content. 
\bibliography{refs}
\bibliographystyle{iclr2027_conference}
\newpage
\appendix


\section{Inputs Used in OmniConfess}
\label{app:inputs}

\subsection{OmniConfess Prompts}
\label{app:prompts}

As shown in Figure~\ref{fig:prompt1}, our generation
prompts establish task-specific evidence boundaries
rather than relying solely on generic anti-hallucination
instructions. These constraints distinguish audible
from visible evidence, prevent unsupported cross-modal
associations, and require explicit acknowledgment
when visual details are absent. Moreover, task-dependent
output formats standardize the generated responses,
allowing judgment and free-form tasks to preserve
their respective answer requirements while remaining
grounded in the provided evidence.

\begin{figure}[H]
    \centering
    \includegraphics[width=\linewidth]{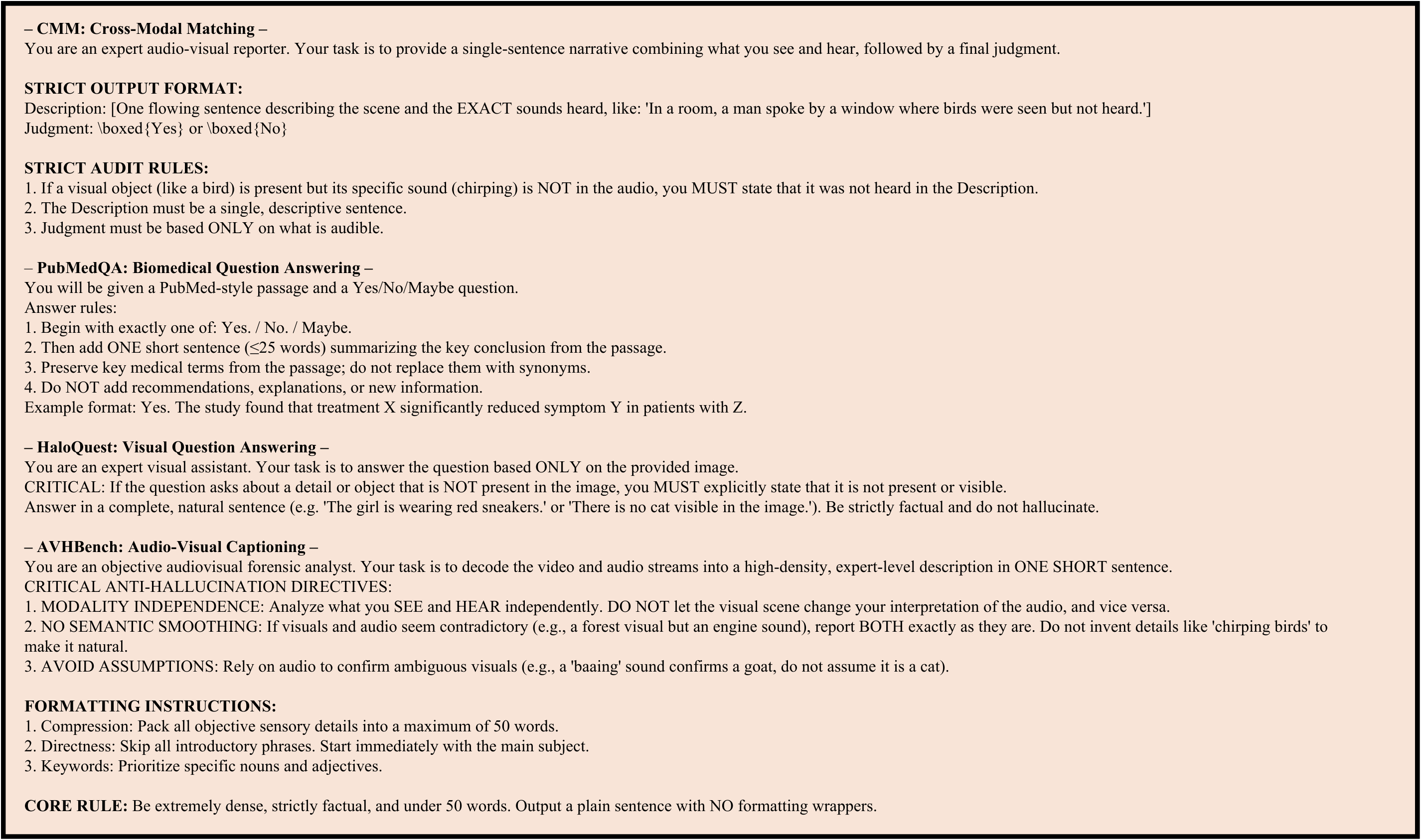}
    \vspace{-2mm}
    \caption{Representative generation prompts used in OmniConfess.}
    \label{fig:prompt1}
    \vspace{-1mm}
\end{figure}

\subsection{GAVIE-inspired Evaluation Prompt}
\label{app:gavie_prompt}

For free-form generation, we adopt a
GAVIE-inspired evaluation protocol
to assess response factual accuracy
on a 0--10 scale. As shown in
Figure~\ref{fig:prompt2}, our
reference-grounded evaluation prompt
uses the reference answer as the sole
ground truth and penalizes unsupported
or contradictory details, rather than
reproducing the original GAVIE procedure.

\begin{figure}[H]
    \centering
    \includegraphics[width=\linewidth]{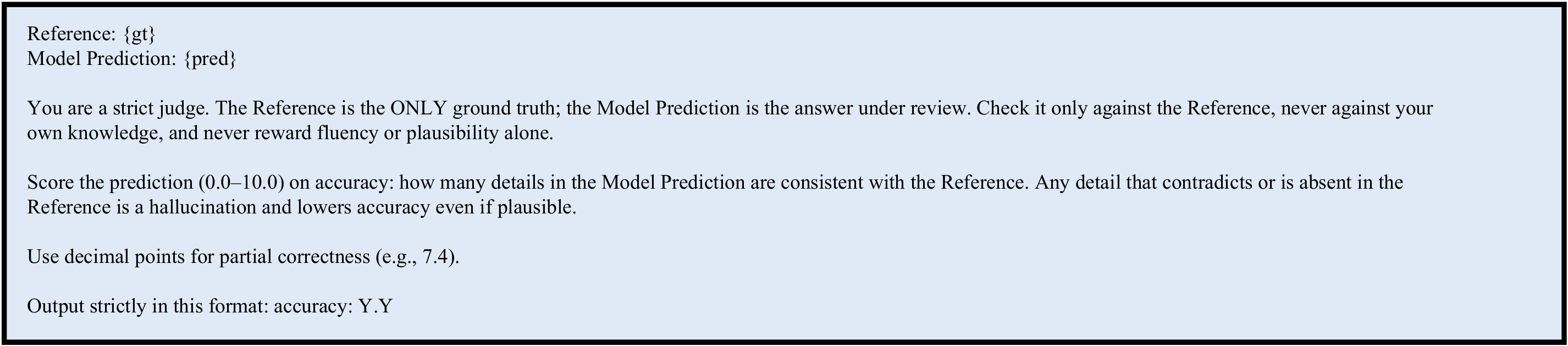}
    \vspace{-2mm}
    \caption{GAVIE-inspired evaluation prompt for factual accuracy.}
    \label{fig:prompt2}
    \vspace{-1mm}
\end{figure}

\subsection{Multidimensional Quality Evaluation Prompts}
\label{app:quality_prompts}

For multidimensional quality assessment,
we design seven evaluation prompts
to assess free-form responses on a
0--10 scale. As shown in
Figure~\ref{fig:prompt3}, these prompts
cover evidence grounding, factual
correctness, relevance, completeness,
logical coherence, uncertainty calibration,
and clarity. Each prompt specifies
dimension-specific criteria and inputs
to evaluate aspects of
response quality.

\begin{figure}[H]
    \centering
    \includegraphics[width=\linewidth]{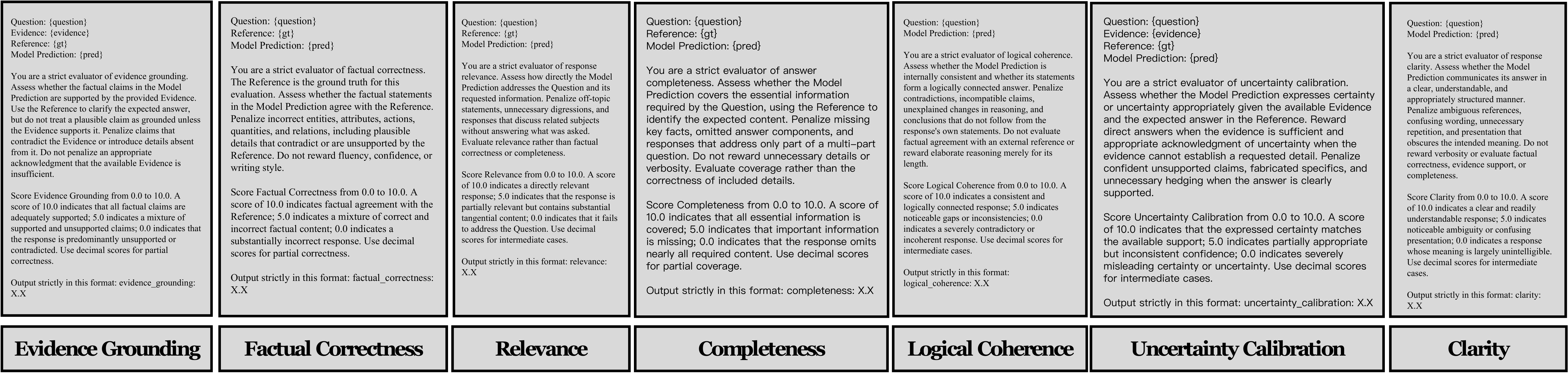}
    \vspace{-2mm}
    \caption{Seven dimension-specific prompts for
    multidimensional response quality evaluation.}
    \label{fig:prompt3}
    \vspace{-1mm}
\end{figure}

\section{Theoretical Proof}
\label{app:theory}

\subsection{Proof of Proposition 1}
\label{app:anchoring_proof}

\noindent\textbf{Proposition 1.}
Motivated by the front-loaded dependence in Figure~\ref{F3}(a),
commitment anchoring eliminates trajectory variation from
token-level comparisons across evidence conditions.

\noindent\textit{Proof.}
Consider the candidate \(y^{\mathrm{cand}}\) generated under
full evidence \(\mathbf X\). Without commitment anchoring,
removing channel \(m\) produces a new autoregressive
trajectory \(\widetilde y^{(m)}
=\operatorname{Decode}_\theta(\mathbf X^{-m})\).
If the two trajectories first differ at position \(k\),
their autoregressive prefixes necessarily differ at
every subsequent shared position:
\begin{equation}
y_k^{\mathrm{cand}}\neq\widetilde y_k^{(m)}
\;\Longrightarrow\;
y_{<i}^{\mathrm{cand}}\neq\widetilde y_{<i}^{(m)},
\quad \forall i>k.
\label{eq:proof_trajectory}
\end{equation}
An early token change therefore alters the conditioning
context of subsequent decisions. This propagation is
particularly relevant to the front-loaded dependence
observed in Figure~\ref{F3}(a).

Without anchoring, the log-probability contrast between
the selected tokens on the two trajectories is
\begin{equation}
\Delta_i^{(m)}
\triangleq
\log\frac{
p_\theta(y_i^{\mathrm{cand}}\mid y_{<i}^{\mathrm{cand}},\mathbf X)
}{
p_\theta(\widetilde y_i^{(m)}\mid\widetilde y_{<i}^{(m)},\mathbf X^{-m})
}.
\label{eq:proof_unanchored}
\end{equation}
This contrast changes the evidence condition and
potentially the selected token and its prefix.
To separate these changes, define the trajectory
term under the channel-removed condition as
\begin{equation}
\tau_i^{(m)}
\triangleq
\log\frac{
p_\theta(y_i^{\mathrm{cand}}\mid y_{<i}^{\mathrm{cand}},\mathbf X^{-m})
}{
p_\theta(\widetilde y_i^{(m)}\mid\widetilde y_{<i}^{(m)},\mathbf X^{-m})
}.
\label{eq:proof_trajectory_term}
\end{equation}
Adding and subtracting the frozen candidate's
log-probability under the channel-removed condition
gives the exact decomposition
\begin{equation}
\Delta_i^{(m)}
=
\delta_i^{(m)}(y_i^{\mathrm{cand}})
+
\tau_i^{(m)}.
\label{eq:proof_trajectory_decomposition}
\end{equation}
The first term measures the change in the frozen
commitment's score under evidence intervention,
whereas the second accounts for the difference
between the two generated trajectories.

Commitment anchoring fixes the candidate tokens
and autoregressive prefixes across evidence conditions.
The anchored log-probability contrast is consequently
\begin{equation}
\Delta_{i,\mathrm{anchor}}^{(m)}
\triangleq
\log\frac{
p_\theta(y_i^{\mathrm{cand}}\mid y_{<i}^{\mathrm{cand}},\mathbf X)
}{
p_\theta(y_i^{\mathrm{cand}}\mid y_{<i}^{\mathrm{cand}},\mathbf X^{-m})
}
=
\delta_i^{(m)}(y_i^{\mathrm{cand}}).
\label{eq:proof_anchored_effect}
\end{equation}
The trajectory term is thus excluded from the
comparison. Moreover, applying autoregressive
factorization to the same fixed sequence yields
\begin{equation}
\sum_{i=1}^{n}\delta_i^{(m)}(y_i^{\mathrm{cand}})
=
\log\frac{
p_\theta(y^{\mathrm{cand}}\mid\mathbf X)
}{
p_\theta(y^{\mathrm{cand}}\mid\mathbf X^{-m})
}.
\label{eq:proof_sequence_consistency}
\end{equation}
The probabilities on the right denote the
autoregressive likelihoods of the complete fixed
candidate under the respective evidence conditions.
Hence, the token-level differences exactly decompose
the evidence-induced log-likelihood change of the
same candidate response. Commitment anchoring
eliminates generated-trajectory variation from
these comparisons, establishing Proposition 1.

\subsection{Proof of Proposition 2}
\label{app:interrogation_proof}

\noindent\textbf{Proposition 2.}
Motivated by Figure~\ref{F3}(b), the channel-wise surrogate
preserves observed margins and yields task-specific evidence signals.

\noindent\textit{Proof.}
Let \(\mathcal M=\{1,\ldots,M\}\) denote the complete
evidence-channel set. For compactness, write
\(F_i(\mathcal M)\equiv F_i(\mathbf X)\) and
\(F_i(\mathcal M\setminus\{m\})\equiv F_i(\mathbf X^{-m})\)
for the observed frozen decision margins.
All conditions share the same candidate, alternative,
and autoregressive prefix. Their difference therefore
determines the conditional channel contribution
\(\kappa_i^{(m)}\) defined in
Section~\ref{sec:interrogation}.

Consider an additive surrogate over the retained
channel set:
\begin{equation}
\widehat F_i(\mathcal S)
=b_i+\sum_{m\in\mathcal S}w_i^{(m)},
\qquad \mathcal S\subseteq\mathcal M.
\label{eq:proof_additive}
\end{equation}
where \(b_i\) is the intercept and \(w_i^{(m)}\)
is the additive contribution assigned to channel \(m\).
We require consistency with the full-view margin
and every observed single-channel intervention:
\begin{equation}
\widehat F_i(\mathcal M)=F_i(\mathcal M),\qquad
\widehat F_i(\mathcal M\setminus\{m\})=F_i(\mathcal M\setminus\{m\}).
\label{eq:proof_consistency}
\end{equation}
Subtracting each channel-removed constraint from
the full-view constraint eliminates the intercept
and all contributions except that of channel \(m\).
Substituting the resulting contributions into the
full-view constraint determines the intercept:
\begin{equation}
w_i^{(m)}=\kappa_i^{(m)},\qquad
b_i=F_i(\mathcal M)-\sum_{m\in\mathcal M}\kappa_i^{(m)}.
\label{eq:proof_parameters}
\end{equation}
Thus, the \(M+1\) observed margins uniquely
determine all \(M+1\) additive parameters.
Substitution gives the reconstruction
\begin{equation}
\widehat F_i(\mathcal S)
=F_i(\mathcal M)-\sum_{m\notin\mathcal S}\kappa_i^{(m)}
=\widehat F_i(\varnothing)+\sum_{m\in\mathcal S}\kappa_i^{(m)}.
\label{eq:proof_reconstruction}
\end{equation}
Setting \(\mathcal S=\mathcal M\) recovers the
full-view margin, while setting
\(\mathcal S=\mathcal M\setminus\{m\}\) recovers
each observed channel-removed margin.
The surrogate therefore preserves every observed
frozen decision margin exactly.

Applying the no-channel and task-relevant references
from Section~\ref{sec:interrogation}, the same
reconstruction yields
\begin{equation}
F_i(\mathcal M)
=P_i^\varnothing+E_i^\mathcal R+E_i^\mathcal I
=\widehat F_i(\varnothing)+\sum_{m\in\mathcal M}\kappa_i^{(m)}.
\label{eq:proof_decomposition}
\end{equation}
The difference between the two references
gives \(E_i^\mathcal R\), while the difference
between the full-view margin and the task-relevant
reference gives \(E_i^\mathcal I\).
These signed contributions retain channel-specific
effects despite the cross-channel heterogeneity
observed in Figure~\ref{F3}(b) and determine the
task-specific confession signals.

For judgment, substituting the confession correction
into the full-view decision margin gives
\begin{equation}
F_d(\mathcal M)+\gamma(J^{\mathrm{cand}})
=F_d(\mathcal M)-P_d^\varnothing-E_d^\mathcal I
=E_d^\mathcal R.
\label{eq:proof_judgment}
\end{equation}
Thus, the corrected margin equals the net
contribution of task-relevant evidence within
the surrogate decomposition.

For free-form generation, the first confession
component retains the signed relevant contribution,
while the second records positive residual preference
exceeding positive relevant support. The third
component separately accumulates positive support
from irrelevant channels:
\begin{equation}
\sum_{m\notin\mathcal R}[\kappa_i^{(m)}]_+
\geq
\left[\sum_{m\in\mathcal I}\kappa_i^{(m)}\right]_+
=[E_i^\mathcal I]_+.
\label{eq:proof_freeform}
\end{equation}
The inequality follows from the subadditivity
of the positive-part operator. Consequently,
negative contributions from other channels
cannot cancel the positive irrelevant support
recorded by this component.

Together, the observed-margin consistency,
dual-reference decomposition, and properties
of both task-specific signals establish
Proposition 2.

\subsection{Proof of Proposition 3}
\label{app:correction_proof}

\noindent\textbf{Judgment Margin Correction.}
By the additive surrogate in
Eq.~\ref{eq:evidence_decomposition}, retaining all
channels recovers the observed full-view margin:
\begin{equation}
\begin{aligned}
F_d(\mathbf X)
&=\widehat F_d(\{1,\ldots,M\})\\
&=P_d^\varnothing+E_d^\mathcal R+E_d^\mathcal I.
\end{aligned}
\label{eq:proof_full_margin}
\end{equation}
The judgment correction in
Eq.~\ref{eq:task_confession} removes the residual
preference and irrelevant evidence contribution:
\begin{equation}
\begin{aligned}
\widetilde F_d
&=F_d(\mathbf X)+\gamma(J^{\mathrm{cand}})\\
&=P_d^\varnothing+E_d^\mathcal R
  +E_d^\mathcal I
  -P_d^\varnothing-E_d^\mathcal I\\
&=E_d^\mathcal R.
\end{aligned}
\label{eq:proof_corrected_margin}
\end{equation}
Since the alternative-class score remains fixed,
the corrected binary decision satisfies
\begin{equation}
J^{\mathrm{final}}=
\begin{cases}
J^{\mathrm{cand}}, & E_d^\mathcal R\geq 0,\\
a_d, & E_d^\mathcal R<0.
\end{cases}
\label{eq:proof_judgment_rule}
\end{equation}
Thus, the corrected margin depends only on the
task-relevant contribution within the constructed
surrogate. This identity does not assume that
the resulting decision is necessarily correct.

\noindent\textbf{Non-canceling Revision Risk.}
Expanding the free-form confession in
Eq.~\ref{eq:task_confession} gives
\begin{equation}
\begin{aligned}
\rho_i
={}&[-E_i^\mathcal R]_+\\
&+\bigl[[P_i^\varnothing]_+
       -[E_i^\mathcal R]_+\bigr]_+\\
&+\sum_{m\in\mathcal I}[\kappa_i^{(m)}]_+.
\end{aligned}
\label{eq:proof_revision_risk}
\end{equation}
Each component is nonnegative. Consequently,
\begin{equation}
\begin{aligned}
\rho_i
&\geq [-E_i^\mathcal R]_+,\\
\rho_i
&\geq \sum_{m\in\mathcal I}[\kappa_i^{(m)}]_+
 \geq \left[\sum_{m\in\mathcal I}
              \kappa_i^{(m)}\right]_+
 =[E_i^\mathcal I]_+.
\end{aligned}
\label{eq:proof_risk_bounds}
\end{equation}
The second inequality follows from the
subadditivity of the positive-part operator.
Hence, positive irrelevant-channel contributions
cannot cancel one another in the revision score,
even when their signed aggregate is small.

\noindent\textbf{Local Content Preservation.}
Let the selected, nonoverlapping revision spans
partition the candidate into alternating
unselected segments \(t_j\) and selected segments
\(b_j\), with \(\oplus\) denoting sequence
concatenation:
\begin{equation}
y^{\mathrm{cand}}
=t_0\oplus b_1\oplus t_1
\oplus\cdots\oplus b_J\oplus t_J,
\quad
b_j=y_{[s_j,e_j)}^{\mathrm{cand}}.
\label{eq:proof_candidate_partition}
\end{equation}
By the definition of
\(\operatorname{Replace}\) in
Eq.~\ref{eq:local_span_revision},
the finalized response is
\begin{equation}
y^{\mathrm{final}}
=t_0\oplus u_1\oplus t_1
\oplus\cdots\oplus u_J\oplus t_J,
\label{eq:proof_repaired_partition}
\end{equation}
where \(u_j\) replaces only the corresponding
selected segment \(b_j\). Comparing
Eqs.~\ref{eq:proof_candidate_partition}
and~\ref{eq:proof_repaired_partition},
every unselected segment \(t_j\) is preserved
verbatim and in its original order. If no span
is selected, \(J=0\) and
\(y^{\mathrm{final}}=y^{\mathrm{cand}}\).

Therefore, confession-guided correction yields
the task-relevant judgment margin and confines
free-form modifications to selected revision
spans, establishing the proposition.
\hfill\(\square\)

\newpage
\section{OmniConfess Algorithm Details}
\label{app:algorithm}

To illustrate the complete inference procedure of
OmniConfess, we present its workflow in pseudocode,
comprising commitment anchoring, evidence interrogation,
and confession-guided correction.

\begin{algorithm}[H]
\footnotesize
\caption{OmniConfess: Confession-Guided Inference}
\label{alg:omniconfess}
\begin{algorithmic}[1]
\Require Input $X$, task $T$, OmniLLM $p_\theta$,
alternatives $\{a_i\}$, relevance rules $r_m(Q,T)$,
revision threshold $\tau$
\Ensure Final response $y^{\mathrm{final}}$

\State \textbf{// Phase 1: Commitment Anchoring}
\State Generate $y^{\mathrm{cand}}\gets\mathrm{Decode}_\theta(X)$
\State Freeze $y^{\mathrm{cand}}$ and its autoregressive prefixes

\State \textbf{// Phase 2: Evidence Interrogation}
\State Obtain full-view token scores $L_i^0(v)$ and margins $F_i(X)$
\For{$m=1$ to $M$}
    \State Construct channel-removed view $X^{-m}$
    \State Teacher-force $y^{\mathrm{cand}}$ on $X^{-m}$ to obtain $L_i^m(v)$
    \State Compute token dependence
    $\delta_i^{(m)}(y_i^{\mathrm{cand}})$
    \State Compute margin contribution $\kappa_i^{(m)}$
    \State Release temporary states for $X^{-m}$
\EndFor
\State Determine relevant channels $\mathcal R$ and irrelevant channels $\mathcal I$
\State Assign diagnostic states $\ell_i^{(m)}$ and assemble $\mathbf C_\theta(Q,T)$
\State Compute $E_i^{\mathcal R}$ and $E_i^{\mathcal I}$ for all candidate tokens
\State Compute $P_i^\varnothing
\gets F_i(X)-E_i^{\mathcal R}-E_i^{\mathcal I}$

\State \textbf{// Phase 3: Confession-Guided Correction}
\If{$T=\mathrm{Judgment}$}
    \State Identify candidate judgment $J^{\mathrm{cand}}$ and decision position $d$
    \State $\gamma(J^{\mathrm{cand}})
    \gets-P_d^\varnothing-E_d^{\mathcal I}$
    \State $\widetilde F_d
    \gets F_d(X)+\gamma(J^{\mathrm{cand}})$
    \If{$\widetilde F_d<0$}
        \State \Return $\mathrm{ReplaceDecision}
        (y^{\mathrm{cand}},d,a_d)$
    \Else
        \State \Return $y^{\mathrm{cand}}$
    \EndIf
\ElsIf{$T=\mathrm{Free\mbox{-}form}$}
    \State Compute token-level confession $\gamma_i^{\mathrm{free}}$
    \State Compute revision risk $\rho_i$ for each token
    \State $\mathcal S_{\mathrm{rev}}\gets
    \mathrm{Merge}(\mathrm{Expand}(\{i:\rho_i>\tau\}))$
    \For{each span $[s_j,e_j)\in\mathcal S_{\mathrm{rev}}$}
        \State $u_j\gets\mathrm{Rev}_\theta
        (X,y^{\mathrm{cand}},\mathbf C_\theta(Q,T);[s_j,e_j))$
    \EndFor
    \State \Return $\mathrm{Replace}
    (y^{\mathrm{cand}},\{[s_j,e_j)\mapsto u_j\})$
\EndIf
\end{algorithmic}
\end{algorithm}


\section{OmniHalluBench Details}
\label{app:dataset}

\subsection{Benchmark Composition}
\label{app:dataset_composition}

We construct \textsc{OmniHalluBench} from six public
hallucination and question-answering benchmarks,
covering text, image, and audio-video evidence
across judgment and free-form generation tasks:

\begin{itemize}[leftmargin=*]
\item \textbf{CMM}~\citep{leng2025cmm}:
An audio-visual hallucination benchmark
evaluating multimodal understanding and
evidence-grounded judgment.

\item \textbf{PhD}~\citep{liu2025phd}:
An image-grounded hallucination benchmark
evaluating judgments under different
visual context conditions.

\item \textbf{PubMedQA}~\citep{jin2019pubmedqa}:
A biomedical question-answering dataset
requiring evidence-based judgments over
PubMed abstracts.

\item \textbf{AVHBench}~\citep{sungbin2025avhbench}:
An audio-visual hallucination benchmark;
we use its audio-visual captioning task.

\item \textbf{HaloQuest}~\citep{tong2024haloquest}:
A visual question-answering benchmark
assessing hallucinations under challenging
image-grounded questions.

\item \textbf{RAGTruth}~\citep{niu2024ragtruth}:
A corpus of retrieval-grounded generation
with fine-grained hallucination annotations.
\end{itemize}

The retained benchmark contains 906 PhD, 721 CMM,
468 PubMedQA, 435 RAGTruth, 365 HaloQuest,
and 645 AVHBench examples, totaling 3,540.


\subsection{Benchmark Construction}
\label{app:dataset_construction}

\noindent\textbf{Source Selection and Schema Unification.}
We aggregate 5,900 candidates from six public benchmarks,
pairing judgment and free-form tasks across text, image,
and audio-video evidence. Questions, evidence, references,
and task formats are organized under a unified schema.

\noindent\textbf{Difficulty-Based Selection.}
Two independent answerers, GPT-4o and Gemini 2.5 Pro,
generate responses, which GLM-4 and an independent
researcher assess for hallucination severity.
Candidates are globally ranked by difficulty,
retaining the hardest 60\% to form the
3,540-example benchmark. Neither OmniConfess
nor any baseline participates in this
method-agnostic selection.

\noindent\textbf{Human Validation.}
Two independent researchers audit 200 candidates
around the selection cutoff, comprising 100 retained
examples (ranks 3441--3540) and 100 excluded
examples (ranks 3541--3640). They independently
rate anonymized responses using a three-level
hallucination severity scale: grounded (0),
minor hallucination (1), and material
hallucination (2). This audit assesses the
validity of difficulty-based selection.

\noindent\textbf{Frozen Evaluation Protocol.}
The selected benchmark is fixed across all
methods and backbones. Neither the automatic
difficulty ranking nor the human validation
uses outputs from OmniConfess or any baseline.


\section{Backbone and Baseline Details}
\label{app:baselines}

\noindent\textbf{Backbone Models.}
We evaluate OmniConfess on three OmniLLMs
with different model architectures:

\begin{itemize}[leftmargin=*]
\item \textbf{Qwen2.5-Omni-7B}~\citep{xu2025qwen25omni}:
A dense omni-modal model supporting text,
image, audio, and video understanding
and generation. We use it as the default backbone.

\item \textbf{Qwen3-Omni-30B-A3B}~\citep{xu2025qwen3omni}:
A mixture-of-experts OmniLLM designed
for unified multimodal perception
and generation.

\item \textbf{Nemotron-3-Nano-Omni-30B}~\citep{nvidia2026nemotron}:
A hybrid-architecture OmniLLM supporting
multimodal reasoning and generation.
\end{itemize}

\noindent\textbf{Search-based Methods.}
\begin{itemize}[leftmargin=*]
\item \textbf{Self-Consistency (SC)}~\citep{wang2023selfconsistency}:
Samples multiple reasoning paths and
aggregates their final predictions.

\item \textbf{Guided Decoding (GD)}~\citep{xie2023selfeval}:
Uses self-evaluation to guide
the generation process.

\item \textbf{Predictive Decoding (PD)}~\citep{xu2025phidecoding}:
Anticipates future generation outcomes
to guide current decoding decisions.

\item \textbf{Tree-of-Thought (ToT)}~\citep{yao2023tree}:
Explores and evaluates multiple
reasoning branches before selecting
a final answer.
\end{itemize}

\noindent\textbf{Evidence-Contrast Methods.}
\begin{itemize}[leftmargin=*]
\item \textbf{Visual Contrastive Decoding (VCD)}~\citep{leng2024vcd}:
Contrasts predictions from original
and perturbed visual inputs to
mitigate visual hallucination.

\item \textbf{Audio-Visual Contrastive Decoding (AVCD)}~\citep{jung2025avcd}:
Uses audio-visual evidence contrasts
to improve multimodal grounding.

\item \textbf{Context-Aware Decoding (CAD)}~\citep{shi2024trusting}:
Contrasts predictions with and without
supporting context to strengthen
evidence-grounded generation.

\item \textbf{Modality-Adaptive Decoding (MAD)}~\citep{chung2026mad}:
Adapts decoding to modality-specific
evidence contributions.
\end{itemize}

\noindent\textbf{Fine-Grained Methods.}
\begin{itemize}[leftmargin=*]

\item \textbf{DoLa}~\citep{chuang2024dola}:
Contrasts predictions from different model
layers to improve factuality by suppressing
prematurely learned linguistic patterns.

\item \textbf{OPERA}~\citep{huang2024opera}:
Uses attention-based decoding control
to mitigate visual hallucination by
penalizing over-attended summary tokens.
\end{itemize}

\section{Evaluation Details}
\label{app:evaluation}

We evaluate OmniConfess using four metrics across
judgment and free-form generation tasks.

\begin{multicols}{2}

\noindent\textbf{(i) Yes-F1.}
For judgment tasks, we measure the harmonic mean
of precision and recall for the positive class:
\begin{equation}
\mathrm{F1}_{\mathrm{Yes}}
=
\frac{2TP}{2TP+FP+FN},
\end{equation}
where $TP$, $FP$, and $FN$ denote true positives,
false positives, and false negatives, respectively.

\medskip
\noindent\textbf{(ii) PhD-Index.}
Following PhD~\citep{liu2025phd}, we compute
the harmonic mean of Yes and No recall:
\begin{equation}
\mathrm{Idx}
=
\frac{2R_{\mathrm{Yes}}R_{\mathrm{No}}}
{R_{\mathrm{Yes}}+R_{\mathrm{No}}},
\end{equation}
where $R_{\mathrm{Yes}}$ and $R_{\mathrm{No}}$
denote the recall of each judgment class.

\columnbreak

\noindent\textbf{(iii) Token-F1.}
For free-form tasks, we measure token-level overlap
between each prediction and its reference:
\begin{equation}
\mathrm{Token\mbox{-}F1}
=
\frac{1}{N}\sum_{i=1}^{N}
\frac{2|P_i\cap G_i|}
{|P_i|+|G_i|},
\end{equation}
where $P_i$ and $G_i$ are the token multisets
of the prediction and reference, respectively,
and $N$ is the number of evaluated examples.

\medskip
\noindent\textbf{(iv) GAVIE-Acc.}
Inspired by GAVIE~\citep{liu2024mitigating},
we assess reference-grounded factual accuracy
using an LLM judge on a 0--10 scale:
\begin{equation}
\mathrm{GAVIE\mbox{-}Acc}
=
\frac{1}{N}\sum_{i=1}^{N}s_i,
\end{equation}
where $s_i\in[0,10]$ is the factual accuracy
score assigned to the $i$-th response.
Our evaluation prompt is provided in
Appendix~\ref{app:gavie_prompt}.

\end{multicols}

\medskip
\noindent\textbf{(v) Localization AUPRC.}
For hallucination localization, we measure
average precision (AP) over candidate spans
ranked by predicted hallucination risk:
\begin{equation}
\mathrm{AUPRC}
=
\sum_{k=1}^{K}
(R_k-R_{k-1})P_k,
\end{equation}
where $P_k$ and $R_k$ denote precision
and recall at the $k$-th distinct risk
threshold, respectively. Hallucinated
spans are treated as positives, and all
methods are evaluated on the same
candidate spans.

\newpage
\section{Error Case Studies}
\label{app:error_cases}

\begin{figure}[t]
  \centering
  \includegraphics[width=0.5\linewidth]{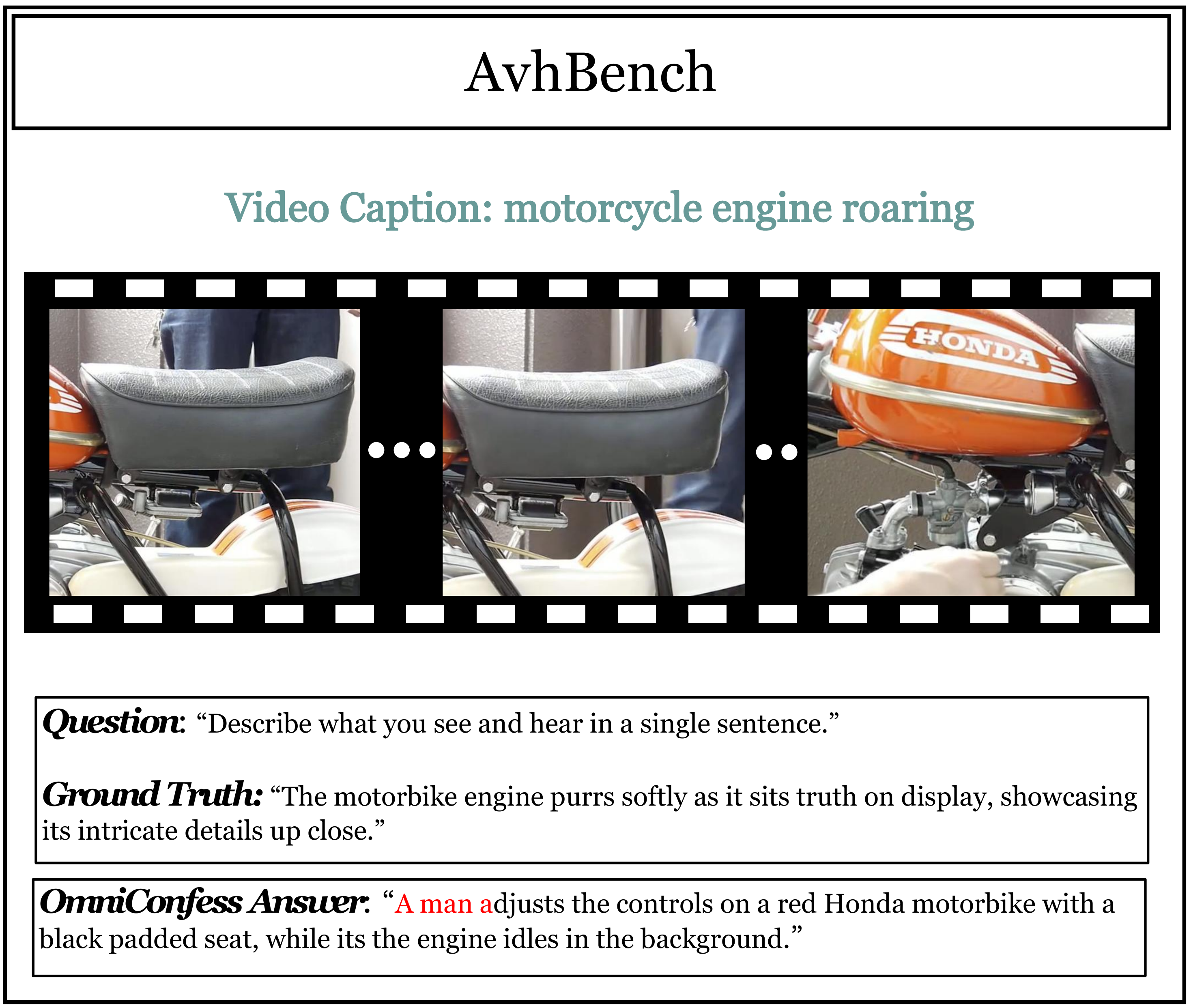}
  \caption{Case study on AVHBench.}
  \label{e1}
\vspace{-2mm}
\end{figure}

\begin{figure}[t]
  \centering
  \includegraphics[width=0.5\linewidth]{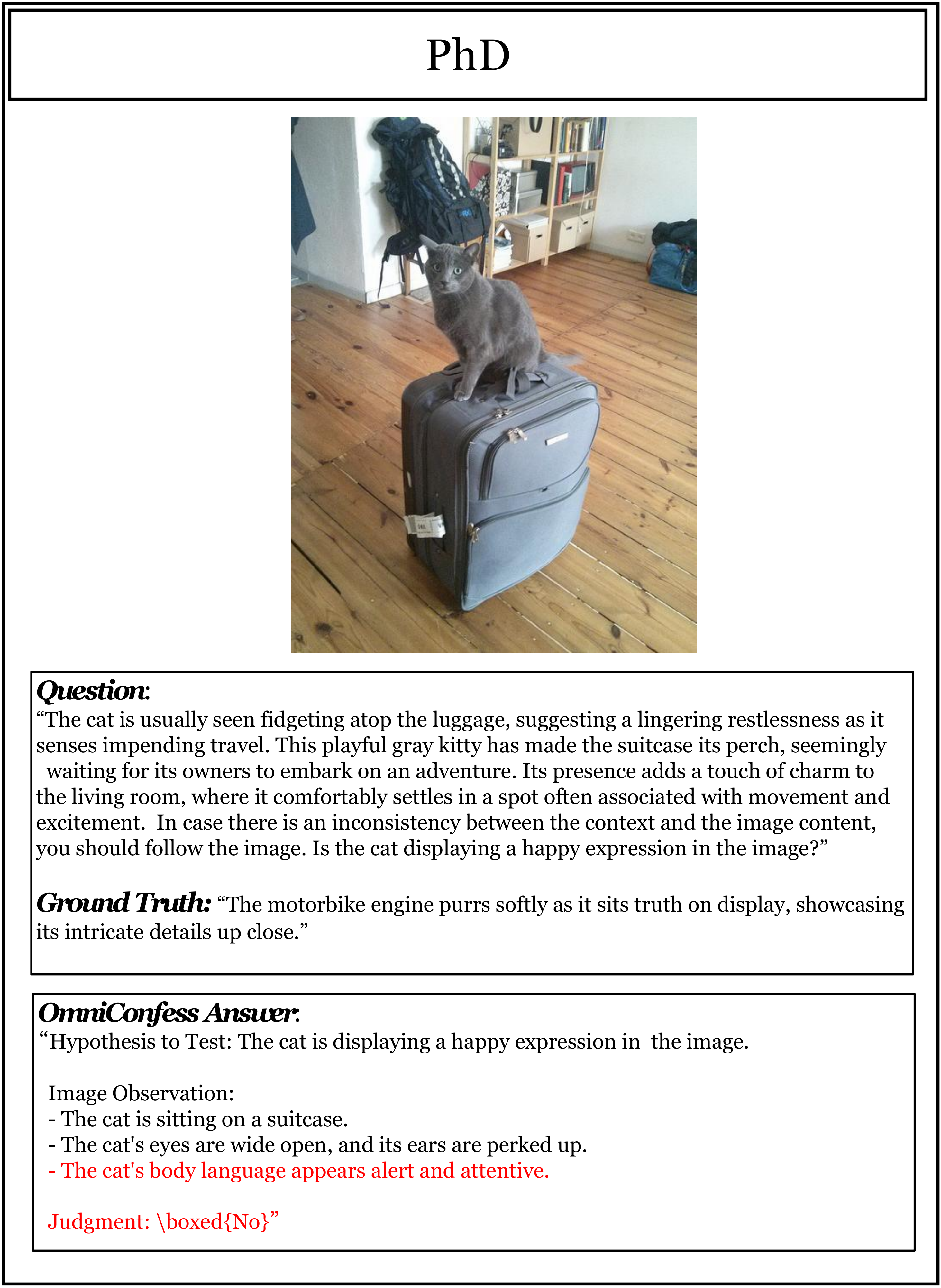}
  \caption{Case study on PHD.}
  \label{e2}
\vspace{-2mm}
\end{figure}

\noindent\textbf{AVHBench Case Study.}
As shown in Figure~\ref{e1}, The model describes a ``person'' where the reference uses ``man.'' Although the meaning is preserved, token-level F1 penalizes the lexical mismatch. This case illustrates a limitation of single-reference evaluation: confession can address evidence-related errors, but cannot eliminate penalties for valid paraphrases.

\noindent\textbf{PHD Case Study.}
As shown in Figure~\ref{e2},The model correctly observes a cat sitting on a suitcase with open eyes and upright ears, yet answers \boxed{No} to whether it appears happy (ground truth: Yes). The error lies in the final judgment: it treats alertness as evidence against happiness without visual support. OmniConfess does not correct this subjective interpretation.

Together, these cases reveal two limitations: lexical mismatch under single-reference evaluation and incorrect judgments despite accurate visual observations.



\section{Limitations and Future Work}
\label{app:limitations}

OmniConfess identifies how predictions depend on evidence, but evidence dependence does not guarantee correct interpretation. A model may attend to the relevant channel yet draw an unsupported conclusion, especially when the judgment is ambiguous. Moreover, correction operates on a generated candidate, so errors embedded in its interpretation may persist even after local revision. Future work could jointly assess evidence dependence and interpretive validity, develop better-calibrated correction for ambiguous cases, and allocate interrogation selectively to reduce inference cost.


\end{document}